\documentclass[sigconf]{acmart}

\AtBeginDocument{%
  }

\setcopyright{acmlicensed}
\copyrightyear{2018}
\acmYear{2018}
\acmDOI{XXXXXXX.XXXXXXX}
\acmConference[Conference acronym 'XX]{Make sure to enter the correct
  conference title from your rights confirmation email}{June 03--05,
  2018}{Woodstock, NY}
\acmISBN{978-1-4503-XXXX-X/2018/06}

\usepackage{url}

\usepackage{times}  
\usepackage{helvet}  
\usepackage{courier}  

\usepackage{graphicx} 
\usepackage{bbm}

\usepackage{amsmath,amssymb,amsfonts}
\usepackage{graphicx}
\usepackage{textcomp}
\usepackage{xcolor}
\usepackage{multirow}
\usepackage{subfigure}

\usepackage[ruled,linesnumbered,lined,commentsnumbered,algo2e]{algorithm2e}
\usepackage{algorithm}
\usepackage{algorithmic}
 
\usepackage{enumitem}

\usepackage[table]{xcolor}

\usepackage{url}            
\usepackage{booktabs}       
\usepackage{threeparttable}
\usepackage{amsfonts}       
\usepackage{nicefrac}       
\usepackage{microtype}      

\usepackage{mathtools}
\usepackage{amsthm}
\usepackage{mathrsfs}
\usepackage{epstopdf}

\usepackage{newfloat}
\usepackage{listings}

\theoremstyle{plain}
\newtheorem{theorem}{Theorem}[section]
\newtheorem{proposition}[theorem]{Proposition}

\theoremstyle{definition}
\newtheorem{definition}[theorem]{Definition}

\theoremstyle{remark}

\begin{document}

\title{Learning to Unlearn: Machine Unlearning via Learning the Unlearning Behaviors
}

\author{Hang Zhang}
\affiliation{%
  \institution{State Key Laboratory for Novel Software Technology \\ Nanjing University}
  \city{Nanjing}
  \country{China}}
\email{zhanghang@lamda.nju.edu.cn}

\author{Kaifeng Zhang}
\affiliation{%
  \institution{State Key Laboratory for Novel Software Technology \\ Nanjing University}
  \city{Nanjing}
  \country{China}}
\email{zhangkf2022@lamda.nju.edu.cn}

\author{Yixiao Ma}
\affiliation{%
  \institution{State Key Laboratory for Novel Software Technology \\ Nanjing University}
  \city{Nanjing}
  \country{China}}
\email{mayx@lamda.nju.edu.cn}

\author{Wei-Jie Xu}
\affiliation{%
  \institution{State Key Laboratory for Novel Software Technology \\ Nanjing University}
  \city{Nanjing}
  \country{China}}
\email{xuwj@lamda.nju.edu.cn}

\author{Ye Zhu}
\affiliation{%
  \institution{Centre for Cyber Resilience and Trust \\ Deakin University}
  \city{Burwood, VIC}
  \country{Australia}}
\email{ye.zhu@ieee.org}

\author{Kai Ming Ting}
\correspondingauthor
\affiliation{%
  \institution{State Key Laboratory for Novel Software Technology \\ Nanjing University}
  \city{Nanjing}
  \country{China}}
\email{tingkm@nju.edu.cn}

\renewcommand{\shortauthors}{Zhang et al.}


\begin{abstract}
 Various machine unlearning techniques have been developed in response to privacy legislation requirements, enabling individuals to exercise their legal right to have their data $D_f$ removed from a machine learning model. This process is typically accomplished via the use of an unlearning function denoted as $U$. Existing methods focus on designing an intricate $U$ to unlearn $D_f \subset D$ from a previous model $A(D)$, so that the unlearned model performs as closely as possible to the retrained model $A(D \setminus D_f)$. However, these methods often suffer from high computational costs when dealing with massive training data, as the complex structures of $U$ become a bottleneck even for models with fewer parameters.
 Inspired by Learning to Optimize, we introduce the first learning-based model-agnostic approach, Learning-to-UnLearn (L2UL). Our core insight is to shift from manually designing $U$ to learning the unlearning behaviors from a distribution perspective, thereby acquiring a simple and efficient $U$ via learning. Our experimental results demonstrate that the accuracy achieved by L2UL is comparable to that of retraining while exhibiting impressive efficiency, particularly in data-intensive scenarios. Furthermore, we validate the performance and scalability of our method on larger models ResNet.
\end{abstract}

\begin{CCSXML}
<ccs2012>
   <concept>
       <concept_id>10002978</concept_id>
       <concept_desc>Security and privacy</concept_desc>
       <concept_significance>500</concept_significance>
   </concept>
   <concept>
       <concept_id>10010147.10010257</concept_id>
       <concept_desc>Computing methodologies~Machine learning</concept_desc>
       <concept_significance>500</concept_significance>
   </concept>
   <concept>
       <concept_id>10010147.10010178</concept_id>
       <concept_desc>Computing methodologies~Artificial intelligence</concept_desc>
       <concept_significance>300</concept_significance>
   </concept>
</ccs2012>
\end{CCSXML}

\ccsdesc[500]{Security and privacy}
\ccsdesc[500]{Computing methodologies~Machine learning}
\ccsdesc[300]{Computing methodologies~Artificial intelligence}

\keywords{
Machine unlearning, Data privacy, Privacy-preserving learning.
}


\received{20 February 2007}
\received[revised]{12 March 2009}
\received[accepted]{5 June 2009}

\maketitle

\section{Introduction}

With the progress of society, there has been a significant increase in the amount of available data. This growth has enriched our lives by allowing us to access a wide range of data shared by numerous individuals online. However, it also presents risks to the privacy of users. In response to this, legislative measures like CCPA (California Consumer Privacy Act), PIPEDA (Personal Information Protection and Electronic Documents Act), and GDPR (General Data Protection Regulation) have been introduced to establish rules that protect users' privacy by the right to delete their data. The task of erasing data from a machine learning model is named machine unlearning \cite{cao2015towards,SISA,surveyofmu}. 

A naive approach is to retrain the model on the remaining data, but this is expensive as models are often trained on very large datasets. Proposing efficient and effective machine unlearning methods has attracted the attention of researchers in recent years. 

Current approaches focus on designing a complex unlearning function $U$ so that the unlearned model performs close to the retrained model.
For example, some methods design $U$ as a retraining on all or part of the data, which is very time-consuming and requires retraining every time on an unlearning request. Some methods necessitate the calculation of the Hessian matrix for the designed $U$, which further increases the computational cost. Consequently, even for relatively simple models, the unlearning efficiency remains bottlenecked by the complexity of the designed $U$, especially when navigating massive datasets.

The goal of an unlearning function $U$ is to produce a model that has unlearned the user data and performs close to the model retrained on the remaining data. \textit{\textbf{Why don't we learn to unlearn directly?} }

Inspired by Learning to Optimize (L2O) \cite{chen2022learning,tang2024learn}, traditional optimization algorithms like Adam are manually designed by human experts through theoretical derivation. In contrast, the L2O approach aims to let machines automatically learn how to optimize. Specifically, the L2O model takes the current state of the optimization iteration as input (such as the current solution and its gradient) and learns to predict the update amount for the next iteration.
\textit{Instead of designing a complex and highly time-consuming $U$, we use the retrained model as ground truth to learn a $U$ whose structure is simple but enough to unlearn $D_f$ effectively}. In this way, we can achieve the goal of unlearning much more efficiently.
Although many methods claim learning to unlearn \cite{cha2024learning, ma2022learn}, they usually refer to learning a new decision boundary to achieve forgetting rather than learning the behavior of unlearning.

In this paper, we introduce a model-agnostic framework called Learning-to-UnLearn (L2UL), which employs a machine-learning approach to learn a $U$.

The main contributions of this work are:
\begin{enumerate}
    \item Pioneering a new framework of machine unlearning via learning, named Learning-to-UnLearn (L2UL). 
    \item Providing the generalization bound of a model unlearned by L2UL in the context of Logistic Regression and Multi-Layer Perceptron (MLP).
    \item Conducting an empirical validation of the effectiveness and efficiency of our proposed method.
\end{enumerate}

The advantages of the proposed L2UL over current methods are two-fold: 
(i) Unlike existing methods, in L2UL, the acquisition of $U$ is achieved through learning rather than designing.
(ii) Due to the simple structure of $U$, the execution time of unlearning via the learned $U$ is very short.

\section{Problem Formulation}
Notations used in this paper are summarized in Table \ref{tab: notations}.

\begin{table}[!ht]
    \centering
    \caption{Key notations used in the paper.}
    \label{tab: notations}
    \scalebox{1}{
    \begin{tabular}{ll}
    \toprule 
    \textbf{Symbols} & \textbf{Definition}   \\ \midrule
    $\mathcal{X}$    &  Sample space         \\
    $\mathcal{Y}$    &  Label space          \\
    $D$              &  Training set  \\
    $D_f$            &  Forget set  \\ \midrule
    $A(\cdot)$       &  Learning algorithm   \\
    $U(\cdot)$       &  Unlearning algorithm \\      
    $\theta$         &  Parameters of the model trained on $D$    \\    
    $\theta'$   &  Parameters of the model trained on $D\setminus D_f$ \\ \midrule
    $\mathcal{P}$    & Probability distribution    \\
    $x$              &  A vector in $\mathbb{R}^d$ \\
    $\kappa$    &   Kernel Function\\
    $\kappa_I$  &  Isolation Kernel \\
    $t,\psi$              &  Parameter of $\kappa_I$ \\
    \bottomrule 
    \end{tabular}}
\end{table}

Let $\mathcal{X}$ denote the sample space, and $\mathcal{Y}$ denote the label space. A data point is an element in $\mathcal{Z}=\mathcal{X}\times \mathcal{Y}$. A hypothesis function $h: \mathcal{X}\rightarrow\mathcal{Y}$ is learned on training set $D\subset \mathcal{Z}$ by algorithm $A$ to assign $y\in\mathcal{Y}$ to $x\in \mathcal{X}$.

\begin{definition}
    A machine learning algorithm is a map $A$ from a subset of $\mathcal{Z}$ to a hypothesis function $h\in\mathcal{H}$, where $\mathcal{H}$ is the space of all hypothesis functions of the algorithm.
\end{definition}

After $A$ learned a model from a dataset $D$, $A(D)$ can be applied to a test set. When some users request to have their data $D_f\subset D$ deleted, model $A(D)$ is obliged to have unlearned their data. The task of unlearning data from a learned model is machine unlearning.

\begin{figure*}[!htb]
    \centering
    \includegraphics[width=0.9\linewidth]{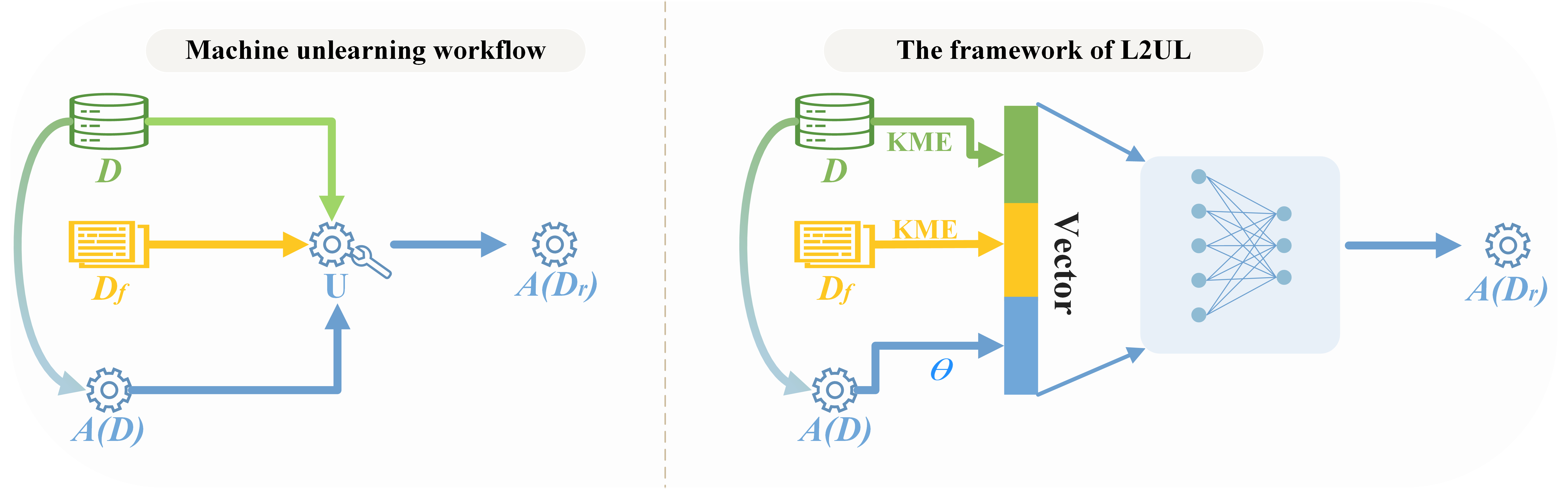}
    \caption{The Machine unlearning workflow and the framework of our proposed method L2UL. The model $A(D)$ is learned on dataset $D$ (which includes $D_f$) by $A$. An unlearning algorithm $U$ unlearns $D_f$ from $A(D)$ to obtain a revised model, which is expected to be equal to the retrained model $A(D\setminus D_f)$.}
    \label{fig:algo}
\end{figure*}

Figure \ref{fig:algo} shows the workflow of machine unlearning. 
To fulfill a request to unlearn $D_f$ from $A(D)$, an unlearning algorithm $U(D, D_f, A(D))$ produces a model which is expected to be the same or close to the model $A(D\setminus D_f)$ retrained on the dataset $D\setminus D_f$.

\begin{definition}
    A machine unlearning algorithm is a map $U:2^{\mathcal{Z}}\times 2^{\mathcal{Z}}\times \mathcal{H}\rightarrow \mathcal{H}$, where $2^{\mathcal{Z}}$ is the power set of $Z$, which is set containing all of $\mathcal{Z}$'s subset. $U$ takes $D, D_f, A(D)$ as input, and outputs the unlearned model, which is expected to be as close to the retrained model $A(D\setminus D_f)$ as possible. 
    \label{def: U}
\end{definition}

\begin{definition}
    Exact Unlearning: given a randomized learning algorithm $A$, a dataset $D$, and a data subset $D_f\subset D$ to be forgotten, an unlearning algorithm $U$ is an exact unlearning process if and only if $\mathcal{P}(U(D, D_f, A(D)))=\mathcal{P}(A(D\setminus D_f))$, where $\mathcal{P}(A(D))$ defines the distribution of all models trained on a dataset $D$ by a learning algorithm $A$. 
\end{definition}

The naive unlearning algorithm simply retrains a new model on $D\setminus D_f$ to achieve exact unlearning. However, it is expensive to retrain when $|D\setminus D_f|$ is large.

\section{Related Work}
\label{sect: related work}
Many machine unlearning algorithms that remove some user data from a previously trained model without retraining have been proposed recently. According to the workflow shown in Figure \ref{fig:algo}, we divide the related work into three categories: $D$-oriented, $D_f$-oriented, and $A(D)$-oriented unlearning.

\textbf{$D$-oriented unlearning} operates on subsets of $D$ such that only retaining on some subsets is required upon a request to unlearn $D_f$.
For example, SISA \cite{SISA} divides the training data $D$ into shards and divides each shard into slices. Each shard is used to train a constituent model, incrementally incorporating slices and preserving its parameters until the training set is extended with a new slice. When $D_f$ is required to be forgotten, retraining is limited to the specific constituent model whose shards encompass $D_f$. Many methods share a similar intuition, such as \cite{cao2015towards, dare2021, kdd2023tree, unlearnkmeans2019, gupta2021adaptive, wu2023deltaboost}.

\textbf{$D_f$-oriented unlearning} considers the impact of forgotten data on the model, through influence functions \cite{wu2022puma, guo2020certified, warnecke2021machinelabels} and Markov Chain Monte Carlo-based sampling \cite{2022MCU, fu2021knowledge}. When $D_f$ needs to be forgotten, the impact of $D_f$ on the model can be erased accordingly.

\textbf{$A(D)$-oriented unlearning} considers how the parameters of a learned model should be updated after $D_f$ is forgotten from the training set \cite{sekhari2021remember, neel2021descent, ma2022learntoforget, tarun2023deep, wu2020deltagrad}. \cite{tarun2023fastyet} updates the parameters by adding noise to the parameters to forget and then tuning the model on the remaining data $D\setminus D_f$. Few-shot learning \cite{chundawat2023zero} is used in the case where
$D$ is unavailable, and only model $A(D)$ is known. 

Although existing works can produce an unlearned model close to the retrained model, none of them use a learning technique to obtain $U$. 
Our proposed L2UL, an $A(D)$-oriented unlearning algorithm, tries to learn the unlearning function $U$, instead of designing one like current works.

\section{Proposed Method}

\subsection{The Framework of L2UL}

Recall Definition \ref{def: U} and Figure \ref{fig:algo}, $U$ has three inputs, $D$ (represented  as a $|D|\times d$ matrix), $D_f$ (represented  as a $|D_f|\times d$ matrix) and the parameter vector $\theta$ of model $A(D)$.
A machine unlearning algorithm $U$ must satisfy the following properties:

\begin{proposition}
The output of $U$ should remain unchanged after any two rows in the data matrix of $D$ or $D_f$ are exchanged. 
\label{prop: exchange}
\end{proposition}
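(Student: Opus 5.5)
The plan is to reduce Proposition~\ref{prop: exchange} to Definition~\ref{def: U}, noting where the matrix encoding departs from it. The key observation is that, in Definition~\ref{def: U}, $U$ is a map on $2^{\mathcal{Z}}\times 2^{\mathcal{Z}}\times\mathcal{H}$. Its first two arguments are therefore \emph{sets} $D$ and $D_f$, not ordered sequences. The matrix encodings of $D$ and $D_f$ are just representations chosen for computation. The proposition should follow once I show that any two matrices obtained from one another by exchanging rows encode the same element of $2^{\mathcal{Z}}$.

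The steps, in order, are as follows.

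First, make the encoding explicit. Let $\phi$ send a matrix $M\in\mathbb{R}^{n\times d}$ (with labels attached) to the set of its rows, $\phi(M)=\{M_{1},\dots,M_{n}\}\subset\mathcal{Z}$. A matrix-level unlearning procedure $\tilde U$ is a faithful implementation of $U$ exactly when $\tilde U(M_D,M_{D_f},\theta)=U(\phi(M_D),\phi(M_{D_f}),A(D))$. This is the sense in which the paper says $U$ ``has three inputs'' represented as matrices.

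Second, let $P_{ij}$ be the transposition permutation matrix. Then $P_{ij}M$ has the same multiset of rows as $M$, and hence $\phi(P_{ij}M)=\phi(M)$. The model $A(D)$, with parameter $\theta$, is untouched by the exchange, since it depends on $D$ only through the set $D$. Substituting into the factorization gives
\[
\tilde U(P_{ij}M_D,M_{D_f},\theta)=\tilde U(M_D,M_{D_f},\theta),
\]
and the analogous identity holds for $M_{D_f}$.

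Third, since every permutation is a product of transpositions, the argument extends by induction to arbitrary row permutations. This yields full permutation invariance, which I expect the paper later uses to motivate a symmetric (e.g.\ pooled or kernel mean embedding) architecture for the learned $U$.

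The main obstacle is conceptual rather than computational. The statement is really a \emph{requirement} on any implementation: an arbitrary function of the matrices (say a flattened MLP) need not satisfy it. So I must argue that it is forced by well-definedness. If $\tilde U$ gave different outputs on two orderings of the same set, it would not induce a well-defined map on $2^{\mathcal{Z}}$, contradicting Definition~\ref{def: U}.

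A minor subtlety is duplicate rows, where a dataset is properly a multiset. Exchanging two rows still preserves the multiset, so the same argument goes through if $2^{\mathcal{Z}}$ is read as finite multisets.
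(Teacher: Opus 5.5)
Your proposal is correct and follows the paper's own justification. The paper argues only that exchanging two rows of the data matrix of $D$ or $D_f$ does not change the dataset, so the output of a map defined on sets (Definition~\ref{def: U}) cannot depend on row order; your encoding map $\phi$, transposition argument and well-definedness remark make that one line explicit, and the extension to arbitrary permutations and the multiset caveat are harmless additions.
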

This is easy to understand because exchanging any two rows of $D$ and $D_f$ does not change the dataset.

An effective method that satisfies the Proposition \ref{prop: exchange} is to represent $D$ and $D_f$ as distribution $\mathcal{P}_D$ and $\mathcal{P}_{D_f}$. No matter how the order of points of the dataset changes, its distribution remains the same.
Therefore, we redefine Definition \ref{def: U} as:
\begin{definition}
        A machine unlearning algorithm is a function $U$: $\mathscr{P}\times \mathscr{P}\times \mathcal{H} \rightarrow\mathcal{H}$, where $\mathscr{P}$ is the space containing all the probability distributions. 
        \label{def: U_P}
\end{definition}

Different from Definition \ref{def: U}, Definition \ref{def: U_P} defines machine unlearning from the perspective of distribution.
Based on Definition \ref{def: U_P} and the unlearning workflow shown in Figure \ref{fig:algo}, L2UL is naturally proposed by using three components of the unlearning workflow: $D, D_f$, and $A(D)$. Figure \ref{fig:algo} shows the framework of L2UL, which uses a neural network $N_U$\footnote{Other models can also be used. A two-layer (256, 64) fully connected neural network is used in our experiments.} to produce the unlearned model $A_{\theta^\prime}$. 

In order to vectorize distribution $\mathcal{P}_D$ and $\mathcal{P}_{D_f}$ as input to $N_U$, we employ Kernel Mean Embedding (KME) \cite{KME2007} to transform distribution $\mathcal{P}_D$ and $\mathcal{P}_{D_f}$ to vector
$\mu_D$ and $\mu_f$ in Reproducing Kernel Hilbert Space (RKHS).

Kernel mean embedding is a nonparametric method that uses an element of an RKHS as the representation of a probability distribution. The mean feature map $\mu_\mathbb{P}=\mathbb{E}[\kappa(\cdot, D)]$ of points $X$ based on kernel $\kappa$ embeds distribution $\mathbb{P}$ into RKHS, and preserves all of the statistical features of $\mathbb{P}$. 

In our framework, when using some common kernels such as the Gaussian kernel, the Laplacian kernel, etc., $\mu_\mathbb{P}$ will map $\mathbb{P}$ into an infinite-dimensional space, which cannot be used as input to the neural network. In other words, we need a kernel whose RKHS dimension is finite.
One approach is to use low-rank representations of the kernel matrix. The most popular examples are Nystr{\"o}m method \cite{kumar2012sampling,williams2000using,musco2017recursive}  and the Random Fourier Features \cite{rahimi2007random,yang2012nystrom}. However, these are all approximate methods and will reduce effectiveness.

Another approach is to use a kernel with an exact and finite-dimensional feature map. Isolation Kernel \cite{ting2018isolation} is such a kernel that directly obtains the feature map without calculating the kernel matrix, which has been widely used and performs well \cite{zhang2025kernel,ting2022new,cao2025anomaly}.
Let $D\subset \mathbb{R}^d$ be a dataset sampled from an unknown distribution $\mathbb{P}$. Isolation Kernel uses a partition mechanism such as Voronoi Diagram \cite{qin2019nearest, pmlr-v202-zhang23bb} or hypersphere \cite{ting2020-IDK} to partition the data into $|\mathcal{D}|$ cells, where $|\mathcal{D}|=\psi$, and $\mathcal{D}\subset D$ is sampled from $D$ as the seeds of partition mechanism. Let $\mathbb{H}_\psi(D)$ denotes the set of all partitions $H$, each cell $\mathscr{I}[z]$ of partition $ H$ isolates $z\in \mathcal{D}$ from the rest of the points in $\mathcal{D}$, the definition of Isolation Kernel is shown as follows:

\begin{definition}
    Isolation Kernel: $\forall x,y\in \mathbb{R}^d$, Isolation Kernel of $x$ and $y$ is defined to be the probability that $x$ and $y$ fall into the same cell $\mathscr{I}[z]$ of partition $H$ over all the partitions $H\in \mathbb{H}_\psi(D)$:\\
    $\kappa_I(x,y|D)$ = $\mathbb{E}_{\mathbb{H}_{\psi}(D)}[\mathbf{1}(x,y\in \mathscr{I}[z]|\mathscr{I}[z]\in H)]$
\end{definition}

Given the $t$ partitions $H_1,...,H_t$, the feature map $\Phi(x)$ of $\kappa_I$ is a $\psi\times t$-dimensional binary column vector.

\begin{definition}
Isolation Kernel's feature map, denoted as $\Phi(x)$, is a vector that represents the cell in which $x$ falls into over partition of $t$ times. For $x\in \mathbb{R}^d$, the dimension of $\Phi(x)$ is $\psi\times t$, where $\psi$ is the number of cells in one partition $H$. Each element of $\Phi(x)$ is either 0 or 1, indicating the cell in which $x$ falls.
\end{definition}

Since the dimension of $\Phi(x)$ is finite, the KME of $\kappa_I$ can be obtained by simply computing the mean of all feature maps.

Then $\mu_D$, $\mu_f$, and the parameters $\theta$ of model $A$ are concatenated as the input $\mathbf{x}$ of the neural network, which contains two hidden layers. 
The KME of distribution $\mathbb{P}$ on the given data $D$ is: 
    \begin{equation}
        \mu_\mathbb{P} = \mathbb{E}[\kappa_I(\cdot, D)]=\frac{\sum_{x\in D} \Phi(x)}{|D|},
        \label{eqa: kme}
    \end{equation}
where $\Phi(X)$ is IK's finite-dimensional feature map determined by two hyperparameters $t,\psi$. This is also known as the isolation distribution kernel (IDK)\cite{ting2020-IDK}. According to Equation \ref{eqa: kme}, we have: $\mu_D = \frac{\sum_{x\in D} \Phi(x)}{|D|}$ and $\mu_f = \frac{\sum_{x\in D_f} \Phi(x)}{|D_f|}.$ 
$\mu_D$ and $\mu_f$ are later concatenated with the parameters of the original model as input $\mathbf{x}$.

The output $\theta'$ is the parameter of the unlearned model $U(D, D_f, A(D))$. And Mean Squared Error (MSE) loss is used as the objective function for training $U$:
\begin{equation}
    \mathcal{L}=\frac{1}{|\theta^\prime|}\sum_{i=1}^{|\theta^\prime|}(\theta_i^\prime-\theta^r_i)^2,
    \label{eqa: mse}
\end{equation}
where $\theta^r$ is the parameters of retrained model $A(D\setminus D_f)$.

Intuitively, we are focusing on models with relatively few parameters but very large training datasets. Because the model has few parameters, $U$ can be learned efficiently, and the large amount of training data can be represented via kernel mean embedding. Therefore, L2UL is effective and highly efficient.

\subsection{How to Learn $U$}

In order to train $U$, first we need a training set $\mathscr{D} = \mathscr{X}\times\mathscr{Y}$. 
We produce $\mathscr{D}$ by the following three steps. First, we generate $\mathfrak{D}$ by randomly sampling $s$ points from $D$ ($s\ll|D|$), and train a model $A(\mathfrak{D})$ on $\mathfrak{D}$. Second, we randomly select some samples as a forget dataset $\mathfrak{D}_f\subset \mathfrak{D}$. $\mu_{\mathfrak{D}}$, $\mu_{\mathfrak{D}_f}$ of $\mathfrak{D}$ and $\mathfrak{D}_f$ are obtained using Equation \ref{eqa: kme}. The parameter $\theta$ of model $A(\mathfrak{D})$ is concatenated with $\mu_{\mathfrak{D}}$ and $\mu_{\mathfrak{D}_f}$ as $\mathbf{x}$. Finally, we obtain $\theta^r$, the parameter of retrained model $A(\mathfrak{D}\setminus\mathfrak{D}_f)$. Now we have one instance $(\mathbf{x},\theta^r)\in\mathscr{X}\times\mathscr{Y}$.

The above steps are repeated $m$ times to obtain $\mathscr{D}$ containing $m$ samples. $N_U$ is later trained on $\mathscr{D}$ by minimizing the loss function $\mathcal{L}$ shown in Equation \ref{eqa: mse}. Since the parameters in KME do not require learning, learning $N_U$ is equivalent to learning $U$. When there is a $D_f$ that needs to be unlearned from model $A(D)$, we can obtain the unlearned model $A_{\theta^\prime}$ via the learned $N_U$.

An important point to highlight is that once we have obtained $N_U$, if we need to unlearn any user data $D_f$, we can obtain the unlearned model $U(D, D_f, A(D))$ via the learned $N_u$ directly without having to repeat the preprocessing and training phase.

The pseudo code of preprocessing for obtaining $\mathscr{D}$ and $\Phi(\cdot)$ for L2UL, and unlearning $D_f$ are shown in Algorithm \ref{alg: preprocessing} and Algorithm \ref{alg: unlearn}, respectively.

\begin{algorithm2e}
    \SetKwData{Left}{left}\SetKwData{This}{this}\SetKwData{Up}{up}
    \SetKwFunction{Union}{Union}\SetKwFunction{FindCompress}{FindCompress}
    \SetKwInOut{Input}{Input}\SetKwInOut{Output}{Output}
    \Input{$D$: Dataset, $m$: Number of Samples in $\mathscr{D}$, $s$: size of $\mathfrak{D}$, $\psi$, $t$: Parameters of $\kappa_I$}
    \Output{$\mathscr{D}$, $\Phi(\cdot)$}
    $\mathscr{D}$ = \{\}\;
    Get the map function $\Phi(\cdot)$ using $\kappa_I$. \;
    $\forall x\in D$, get the feature map $\Phi(x)$\;
    \For{$i\leftarrow 1$ \KwTo $m$}{
    $\mathfrak{D} \leftarrow$ randomly sample $s$ points from $D$\;
    $\mathfrak{D}_f \leftarrow$ randomly sample from $\mathfrak{D}$ \;
    $A(\mathfrak{D}), A(\mathfrak{D}\setminus \mathfrak{D}_f) \leftarrow$ train A on $\mathfrak{D}$, $\mathfrak{D}\setminus \mathfrak{D}_f$\;
    $\theta,\theta^r \leftarrow$ Extract parameters of $A(\mathfrak{D}), A(\mathfrak{D}\setminus \mathfrak{D}_f)$\;
    $\mu_\mathfrak{D}, \mu_f \leftarrow$ obtain KME using Equation 1 \;
    $\mathbf{x} \leftarrow$ concatenation of $\mu_\mathfrak{D}, \mu_f$ and $\theta$ \;
    $\mathscr{D}\leftarrow$ $\mathscr{D} \cup \{(\mathbf{x},\theta^r)\}$\;
    }
    Return $\mathscr{D}$, $\Phi(\cdot)$.
\caption{Preprocessing}\label{alg: preprocessing}
\end{algorithm2e}

\begin{algorithm2e}
    \SetKwData{Left}{left}\SetKwData{This}{this}\SetKwData{Up}{up}
    \SetKwFunction{Union}{Union}\SetKwFunction{FindCompress}{FindCompress}
    \SetKwInOut{Input}{Input}\SetKwInOut{Output}{Output}
    \Input{$N_U, \mu_D$, $\Phi(\cdot)$, $A(D)$}
    \Output{$A_{\theta'}$}
    $\forall x\in D_f$, get the feature map $\Phi(x)$\;
    $\mu_f \leftarrow$ obtain KME of $D_f$ through Equation 1\;
    $\mathbf{x} \leftarrow$ concatenation of $\mu_D, \mu_f$ and $A(D)$'s parameter $\theta$ \;
    $\theta' \leftarrow N_U(\mathbf{x})$\;
    Return unlearned model $A_{\theta'}$.
\caption{Unlearn $D_f$}\label{alg: unlearn}
\end{algorithm2e}

\subsection{Generalization Bound Analysis}
Here, we provided the generalization bound of the classifier (Logistic Regression, Multilayer Perceptron) unlearned by L2UL.

Denote the expected loss of Logistic Regression model $A(x|\theta)=\frac{e^{\theta x+b}}{1+e^{\theta x+b}}$ as: $L(A)=E_{x,y}[y\log(A(x))+(1-y)\log(1-A(x)]$.

L2UL learns the unlearning function $U$. The expected loss of the model unlearned by $U$ is upper-bounded, as shown in the following Theorem \ref{theorem: logistic Unlearn error}. 

\begin{theorem}
    \begin{equation*}
         L({U(A(D), D, D_f)})\leq L({A(D\setminus D_f)}) + 2R\sqrt{\epsilon},
    \end{equation*} 
    where $R$ is the radius of the dataset $X$ ($\forall x\in X, ||x||\leq R$), and $\epsilon$ is the generalization error bound (MSE) of the unlearning model $U$ in L2UL.
    \label{theorem: logistic Unlearn error}
\end{theorem}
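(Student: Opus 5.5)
The plan is to compare the unlearned parameters $\theta'$ produced by $U$ with the retrained parameters $\theta^r$ of $A(D\setminus D_f)$. The argument has two parts: first, the logistic loss is Lipschitz in $\theta$ with constant governed by $R$; second, the MSE guarantee $\epsilon$ of the learned $U$ controls $\|\theta'-\theta^r\|$. Throughout, $L$ is read as the usual (positive) cross-entropy loss $-E[y\log A + (1-y)\log(1-A)]$, and the bias $b$ is absorbed into $\theta$ by appending a constant coordinate (or treated as fixed).

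\textbf{Step 1 (Lipschitz bound for a fixed sample).} For fixed $(x,y)$, set $z=\theta x+b$ and $\ell(\theta;x,y) = -y\log\sigma(z)-(1-y)\log(1-\sigma(z))$. Then
\[
\nabla_\theta \ell = (\sigma(z)-y)\,x .
\]
Since $|\sigma(z)-y|\le 1$ and $\|x\|\le R$, we get $\|\nabla_\theta\ell\|\le R$. By the mean value theorem,
\[
|\ell(\theta';x,y)-\ell(\theta^r;x,y)|\le R\|\theta'-\theta^r\|.
\]
Taking expectation over $(x,y)$ and using the triangle inequality gives
\[
L(A_{\theta'})\le L(A_{\theta^r})+R\,E\|\theta'-\theta^r\|.
\]

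\textbf{Step 2 (from MSE to parameter distance).} The learned $U$ has generalization MSE at most $\epsilon$. I interpret this as $E\|\theta'-\theta^r\|^2\le\epsilon$ over the distribution of unlearning tasks, or per task if $\epsilon$ is a uniform bound. Jensen's inequality then gives $E\|\theta'-\theta^r\|\le\sqrt{\epsilon}$, which yields the bound with constant $R$. The stated constant $2R$ leaves room for two places where slack may be needed:
\begin{itemize}
\item if the bias is treated as a separate coordinate with $|x_0|=1$, one may have to use $\|(x,1)\|\le R+1\le 2R$, assuming $R\ge1$;
\item the MSE in Equation~\ref{eqa: mse} is normalized by $|\theta'|$, so converting it to a squared norm must be handled. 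I would absorb this by defining $\epsilon$ as the generalization bound on the unnormalized squared error, or by noting a constant-factor loss.
\end{itemize}
Either way, the final inequality is $L(U(A(D),D,D_f))\le L(A(D\setminus D_f))+2R\sqrt{\epsilon}$.

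\textbf{Main obstacle.} The delicate step is Step 2: making precise what ``generalization error bound (MSE) of $U$'' means, namely the normalization by $|\theta'|$ and whether the bound holds per instance or only in expectation. I would fix the interpretation $\|\theta'-\theta^r\|^2\le\epsilon$ (in expectation over tasks) and keep the factor $2$ as a safety margin covering the bias coordinate. The Lipschitz computation in Step 1 is routine.
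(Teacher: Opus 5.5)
Your argument is correct, and it proves the statement by a different and tighter route than the paper. The paper sets $e=\theta'-\theta^r$, takes $\|e\|\le\sqrt{\epsilon}$ as the meaning of $\epsilon$, and expands $L$ to first order around $\theta^r$. It bounds the gradient $E_{x,y}[(f_{\theta^r}(x)-y)x]$ only at that single point, using Cauchy--Schwarz and convexity of the norm. It therefore ends with $2R\sqrt{\epsilon}+o(\sqrt{\epsilon})$, which gives the stated inequality only asymptotically as $\epsilon\to 0$. You instead use that $\|(\sigma(z)-y)x\|\le R$ holds for \emph{every} $\theta$, and apply the mean value theorem along the segment from $\theta^r$ to $\theta'$. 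This gives an exact, remainder-free Lipschitz bound with constant $R$, better than $2R$. It is also insensitive to the sign convention of $L$; the paper writes the log-likelihood but differentiates the cross-entropy. The paper's factor $2$ is mere slack, not a margin for a bias coordinate or for the $1/|\theta'|$ normalization. The paper ignores the bias and identifies $\epsilon$ with a bound on $\|e\|^2$, which is exactly your first choice.

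Two small cautions. First, your fallback of ``noting a constant-factor loss'' for the normalization would not work, since it costs a factor $\sqrt{|\theta'|}$ that grows with the dimension. Second, $\theta'$ and $\theta^r$ are fixed for a given task, so Step 1 should read $R\|\theta'-\theta^r\|$ rather than $R\,E\|\theta'-\theta^r\|$. If $\epsilon$ bounds $\|\theta'-\theta^r\|^2$ only in expectation over tasks, Jensen yields the theorem only averaged over tasks. You need the per-task reading, which is the one the paper implicitly uses.
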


Similarly, the expected loss of Multilayer Perceptron (MLP) unlearned by $U$  is also upper-bounded, as shown in Theorem \ref{theorem: MLP Unlearn error}.

\begin{theorem}
    \begin{equation*}
        L({U(A(D), D, D_f)})\leq L({A(D\setminus D_f)}) + C\sqrt{\epsilon},
    \end{equation*} 
    where $L$ is cross entropy loss, $C$ is a finite scalar determined by the MLP structure and $\epsilon$ is the generalization bound (MSE) of the unlearning model $U$ in L2UL.
    \label{theorem: MLP Unlearn error}
\end{theorem}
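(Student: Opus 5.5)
The plan is to mirror the argument for Theorem \ref{theorem: logistic Unlearn error}: bound the loss gap by a Lipschitz constant of the loss in the parameters times the parameter distance, then control that distance by $\sqrt{\epsilon}$ through the MSE guarantee of $U$. Write $\theta'$ for the output of $N_U$, i.e.\ the parameters of $U(A(D),D,D_f)$, and $\theta^r$ for those of $A(D\setminus D_f)$. Since $\epsilon$ bounds the expected mean squared error $\frac{1}{|\theta'|}\sum_i(\theta'_i-\theta^r_i)^2$, Jensen's inequality gives $\mathbb{E}\|\theta'-\theta^r\|_2\le\sqrt{|\theta'|\,\epsilon}$. The dimension factor $\sqrt{|\theta'|}$ will be absorbed into $C$; it is fixed by the MLP structure.

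Next, I show that the per-sample cross entropy $\ell(\theta;x,y)=-\log \mathrm{softmax}(f_\theta(x))_y$ is Lipschitz in $\theta$ on the relevant parameter region. Softmax cross entropy is $\sqrt{2}$-Lipschitz (in fact $2$-Lipschitz suffices) with respect to the logits $f_\theta(x)$. So it is enough to bound $\|f_{\theta'}(x)-f_{\theta^r}(x)\|\le K\|\theta'-\theta^r\|$ uniformly over $\|x\|\le R$.

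For an MLP $f_\theta(x)=W_L\sigma(W_{L-1}\cdots\sigma(W_1x+b_1)\cdots)+b_L$ with a $1$-Lipschitz activation $\sigma$ (ReLU, sigmoid, tanh), I telescope over layers. I replace one layer's weights at a time, moving from $\theta^r$ to $\theta'$. The change at layer $l$ is at most $\|W_l'-W_l^r\|\,\|h_{l-1}\|+\|b_l'-b_l^r\|$. This change is then propagated through later layers, multiplied by $\prod_{k>l}\|W_k\|$. The hidden activations are bounded by $\|h_{l}\|\le R\prod_{k\le l}\|W_k\|+(\text{bias terms})$.

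Summing over layers and applying Cauchy--Schwarz gives $K=\sqrt{L}\,\max_l\big[(R\prod_{k<l}B_k+\text{biases}+1)\prod_{k>l}B_k\big]$. Here $B_k$ is a bound on the spectral norms $\|W_k\|$ for both $\theta'$ and $\theta^r$. Taking expectations over $(x,y)$ then yields
\[
L(U(A(D),D,D_f))-L(A(D\setminus D_f))\le \mathbb{E}|\ell(\theta';x,y)-\ell(\theta^r;x,y)|\le 2K\sqrt{|\theta'|\,\epsilon},
\]
so $C=2K\sqrt{|\theta'|}$. This reduces to $2R$ in the logistic case, matching Theorem \ref{theorem: logistic Unlearn error}.

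The main obstacle is the uniform control of the weight norms $B_k$ along the interpolation, since the MLP loss is not globally Lipschitz in $\theta$. I would handle this by assuming, as the phrase ``determined by the MLP structure'' suggests, that all parameters lie in a bounded set. For example, I would take the weight norms of trained and predicted models to be bounded, e.g.\ by weight decay or clipping the output of $N_U$. This gives a common bound $B_k$ for both endpoints, which makes $C$ finite. A secondary subtlety is that $\epsilon$ is a bound in expectation over unlearning tasks rather than a pointwise bound. I would state the result in expectation over the task distribution, or for a task where the MSE is at most $\epsilon$, as the logistic theorem implicitly does.
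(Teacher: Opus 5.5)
Your argument is correct under the boundedness assumptions you state, but it follows a different route from the paper's.

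\textbf{The paper's route.} The paper copies its logistic proof. It writes $L(\hat\Theta)-L(\Theta)=\nabla L(\Theta)^\top e+o(\|e\|)$ at the retrained parameters $\Theta$ and applies Cauchy--Schwarz. It then bounds $\|\nabla L(\Theta)\|$ by running the backpropagation recursion for a sigmoid-hidden, softmax-output network, in three steps:
\begin{enumerate}
\item $|dZ_{N+1}|\le 2$ entrywise;
\item $\|dZ_k\|_\infty\le \frac{R}{4}n_{k+1}R_{k+1}$, using $\sigma'\le 1/4$ and $\|W\|_\infty\le R$ (here $R$ bounds the weights, not the data);
\item Frobenius bounds on $dW_k$ and $dB_k$, using that all activations lie in $[0,1]$.
\end{enumerate}

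\textbf{Your route.} You use a forward-perturbation (layer-swapping) argument instead. You combine the $\sqrt2$-Lipschitzness of softmax cross entropy in the logits with a Lipschitz constant of $\theta\mapsto f_\theta(x)$ on a norm ball.

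\textbf{What each buys.} The paper's version needs weight control only at the single point $\Theta$ and gives an explicit, width-dependent constant. However, it actually proves $C\sqrt\epsilon+o(\sqrt\epsilon)$, so the stated inequality holds only to first order as $\epsilon\to 0$. It also uses $\|e\|\le\sqrt\epsilon$, which ignores the $1/|\theta'|$ normalisation of the MSE in Equation~\ref{eqa: mse}.

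Your version gives a genuine non-asymptotic inequality with no remainder. It handles the $\sqrt{|\theta'|}$ factor and the in-expectation nature of $\epsilon$ correctly, and it works for any $1$-Lipschitz activation. The price is that you need norm bounds on both endpoints, in particular on the output of $N_U$. This is not really an extra cost, since making the paper's $o(\|e\|)$ term quantitative would also require control on a neighbourhood of $\Theta$.

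\textbf{One small overstatement.} With a single layer your $K$ equals $R+1$, so your constant becomes $2(R+1)\sqrt{|\theta'|}$, not exactly the $2R$ of Theorem~\ref{theorem: logistic Unlearn error}. The two agree only up to the bias and dimension factors.
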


Both Theorem \ref{theorem: logistic Unlearn error} and \ref{theorem: MLP Unlearn error} indicate that the expected loss of the classifier unlearned by L2UL is close to that of retrained classifier.

\section{Experiments}

\textbf{System:} The experiments are executed on a Linux machine with 1T GB RAM and an AMD 128-core CPU, with each core running at 2 GHz.

\textbf{Data:} We use seven public datasets in our experiments\footnote{The datasets are available at \url{https://archive.ics.uci.edu/}, and \url{https://www.csie.ntu.edu.tw/~cjlin/libsvmtools/datasets/}}. The specifications of the datasets are summarised in Table \ref{tab: datasets}. 

\begin{table}[!ht]
\centering
\caption{Dataset Summary. n=no.instances, d=no.attributes.}
\label{tab: datasets}
\setlength{\tabcolsep}{0.7mm}{
\begin{tabular}{l|ccccccc}
\toprule
    & Magic      & Adult      & Sepsus   & Skin   & Covetype  & SUSY       & HIGGS  \\   
\midrule
n          & 19020      & 32562      & 40328    & 245057 & 581012  & 5000000   & 11000000  \\
d          & 10          & 14          & 89        & 3       & 54       & 18          & 28        \\
\bottomrule 
\end{tabular}}
\end{table}

\textbf{Comparison algorithms:} We compare L2UL\footnote{The codes are available at \url{https://anonymous.4open.science/r/L2UL}} with Retrain, one $D$-oriented unlearning algorithm: SISA and two $A(D)$-oriented unlearning algorithms: DeltaGrad, FYEMU.

\begin{enumerate}
    \item \textbf{Retrain:} As the most naive approach, it just retrains the model on the remaining dataset $D\setminus D_f$. 
    \item \textbf{SISA:} SISA \cite{SISA} is a well-known framework that partitions the data to reduce the retraining time by just retraining a single model on the shard that needs to be forgotten.
    \item \textbf{DeltaGrad:} DeltaGrad \cite{wu2020deltagrad} unlearns the data by differentiating the optimization path with the Quasi-Newton method based on information cached during the training phase.
    \item \textbf{FYEMU:} FYEMU \cite{tarun2023fastyet} is a unlearning algorithm for Neural Networks, which first unlearns data by adding noise to the model parameters, and then obtains a new model through fine-tuning.
\end{enumerate}

\textbf{Evaluation:} We evaluate the efficiency and effectiveness of unlearning algorithms using unlearning time and the accuracy of the unlearned classifier on test data ($20\%$), which is averaged over 10 runs. In each run, we unlearn one randomly selected instance\footnote{$|D_f|=1$ for LR, $|D_f|=1$, $|D_f|=100$, and $|D_f|=1000$ for MLP.}.

\subsection{Unlearn linear classifier: Logistic Regression}

To begin, we assess the performance of our L2UL approach using a linear classifier known as Logistic Regression (LR). We have demonstrated that the generalization error of the unlearned classifier, denoted as $U(D, D_f, A(D))$, is bounded. Additionally, here we find that L2UL is both efficient and effective when applied to LR.

\begin{figure*}[h!] 
\centerline{\includegraphics[width=0.5\textwidth]{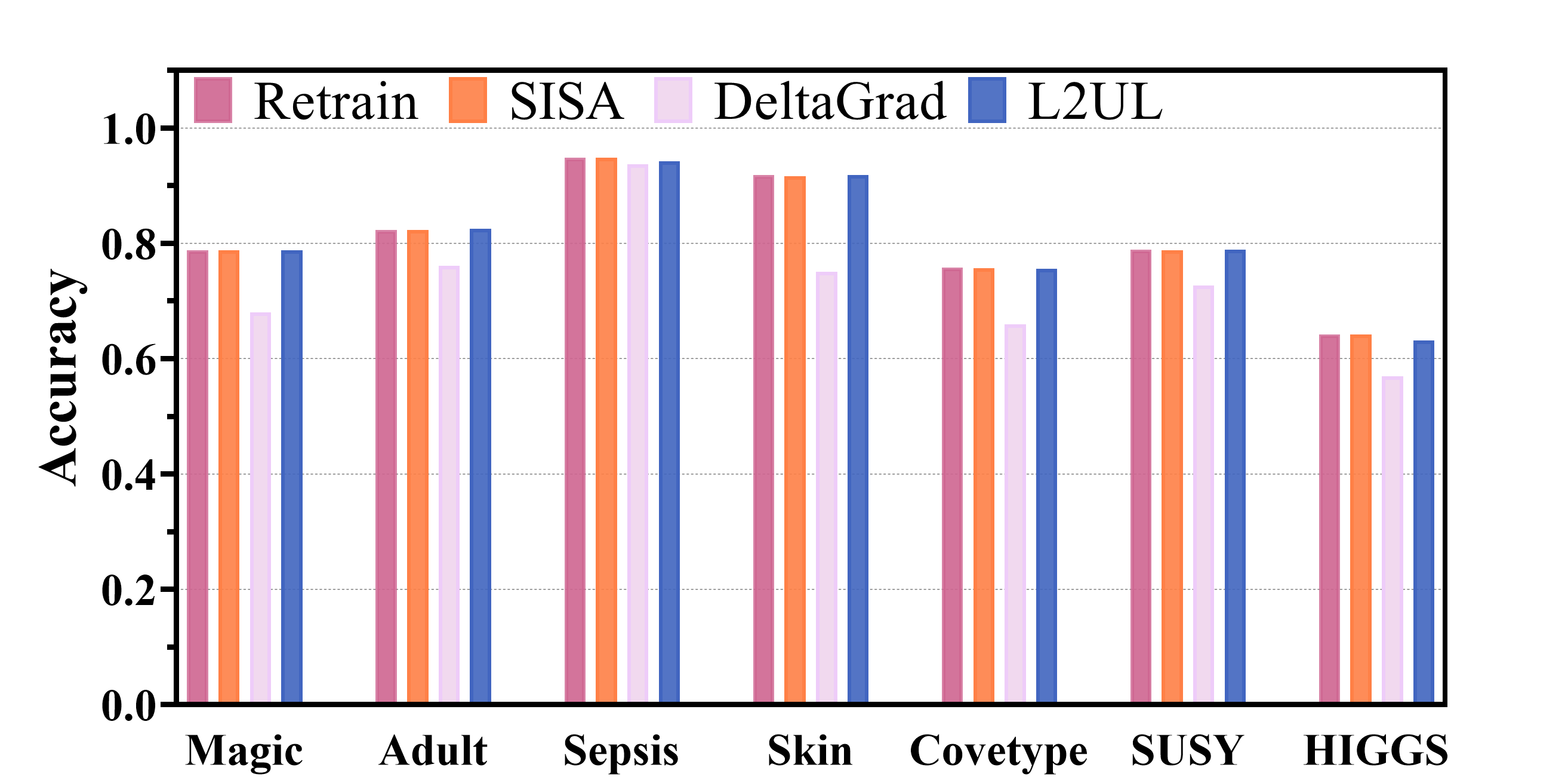}
\includegraphics[width=0.5\textwidth]{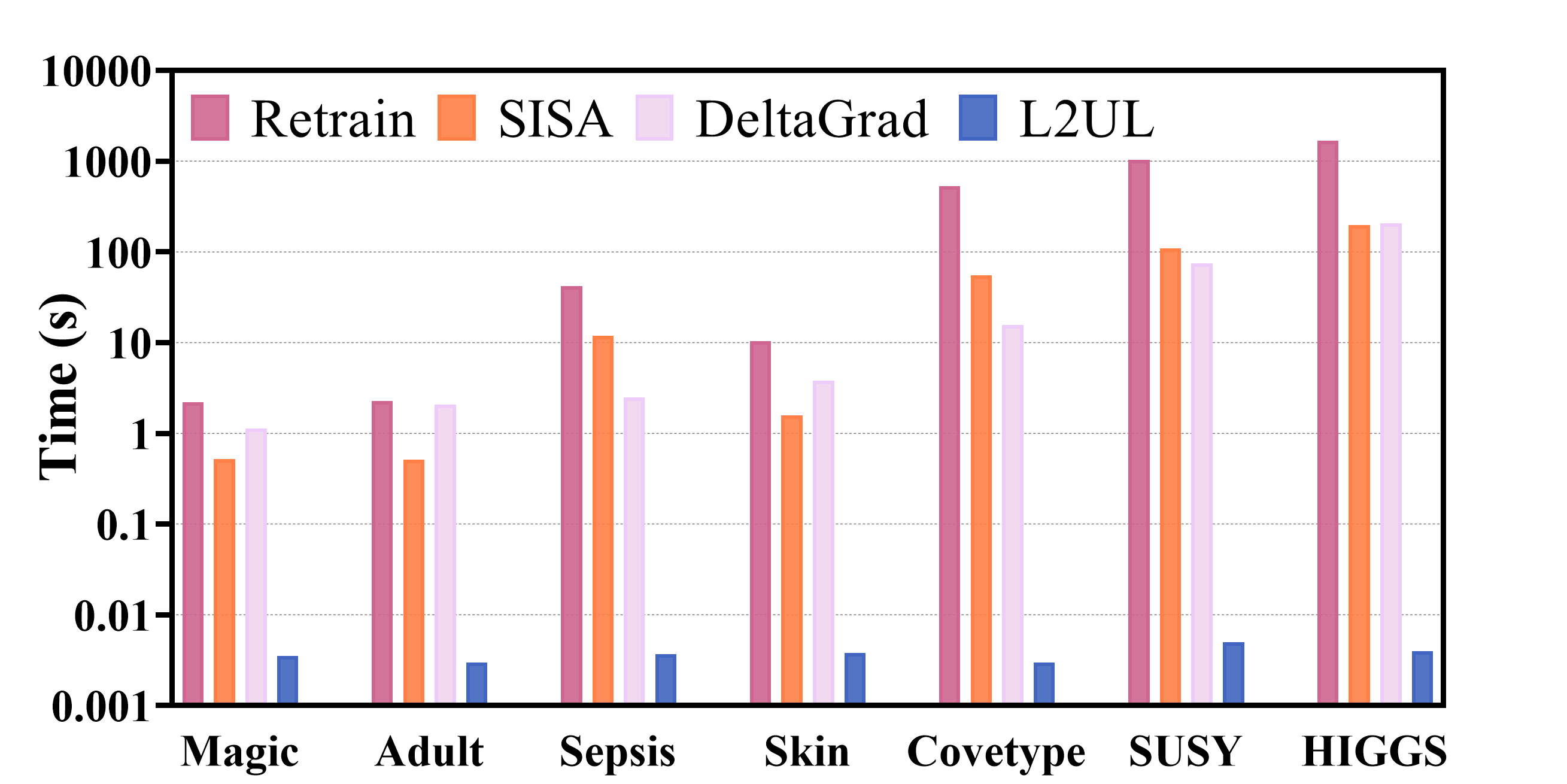}}
\caption{Results of unlearning Logistic Regression.}
\label{fig: ACC_LR_1} 
\end{figure*}

The results of unlearning one instance in terms of accuracy and unlearning time are shown in Figure \ref{fig: ACC_LR_1}. L2UL achieves comparable accuracy to Retrain and SISA and outperforms DeltaGrad over all seven datasets. 
Compared with Retain, SISA, and DeltaGrad, L2UL speeds up 630, 149, and 282 times on the Magic dataset, and 422500, 49715, and 51500 times on the HIGGS dataset, respectively.

\subsection{Unlearn non-linear classifier: Multilayer Perceptron}

Nonlinear classifiers can learn complex nonlinear boundaries, which require a more complex process to learn. Besides, a nonlinear classifier usually has more parameters, which makes its parameter space very large. Implementing unlearning in this large space is a more challenging problem.

In this subsection, we test the performance of L2UL on  Multilayer Perceptron (MLP) \cite{taud2018multilayer}. We use cross-entropy as the loss of the MLP, which has 1 hidden layer with 10 neurons.

\begin{figure*}[!ht] 
\centerline{
\includegraphics[width=0.5\textwidth]{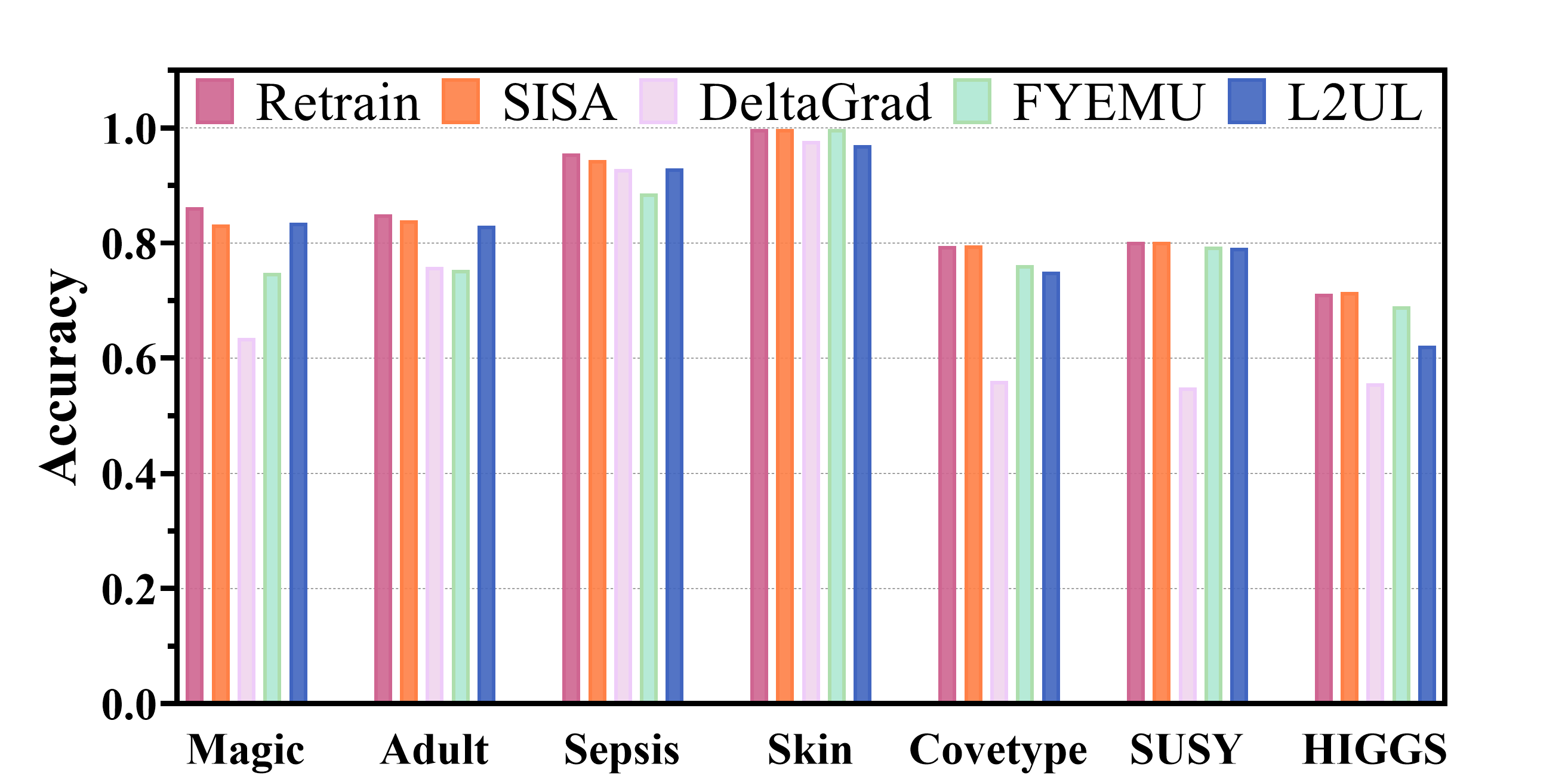}
\includegraphics[width=0.5\textwidth]{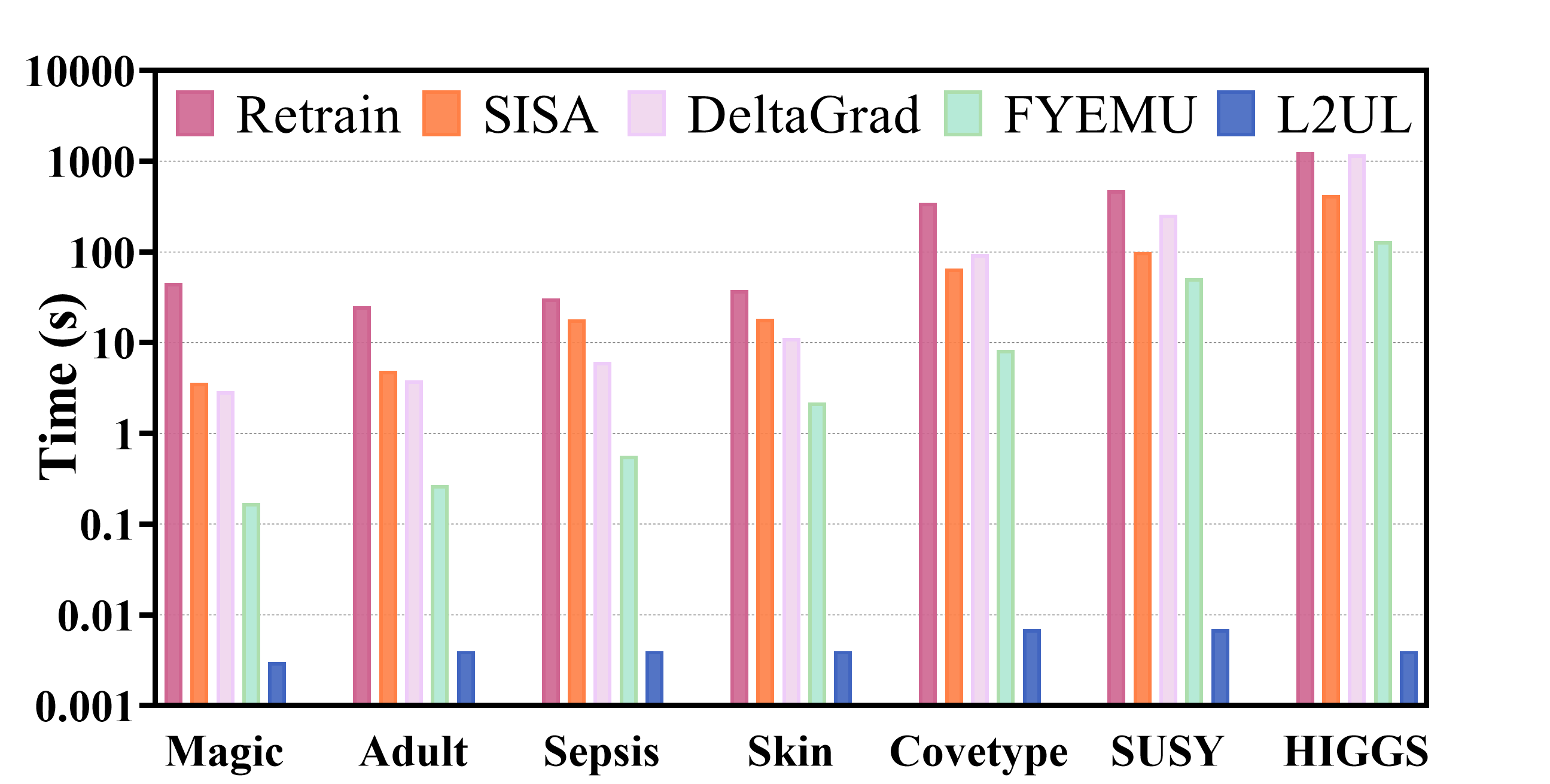} }
\caption{Results of unlearning MLP ($|D_f|=1$).} 
\label{fig: ACC_MLP_1} 
\end{figure*}

The results of unlearning one instance are shown in Figure \ref{fig: ACC_MLP_1}. L2UL achieves comparable accuracy to Retrain and SISA except HIGGS (we will discuss this in Section \ref{sect: PSA}). L2UL achieves close accuracy to Deltagrad on Sepsis and Skin and outperforms DeltaGrad on five other datasets. L2UL outperforms FYEMU on the two smallest datasets (Magic and Adult) and achieves close accuracy to FYEMU on five other datasets.

Compared with Retain, SISA, DeltaGrad, and FYEMU, L2UL speeds up 15087, 1203, 730, and 43 times, respectively, on the Magic dataset, and 398000, 105250, 294750, and 32750 times on the HIGGS dataset.
The efficiency of L2UL comes from the simple structure of $U$, which ensures that a single run of $U$ is very quick. Moreover, once $U$ is learned, 
the parameters of $U$ do not need to change with high probability when a new $D_f$ comes. We will discuss this in Section \ref{sec: disscussion}. 
L2UL just directly uses the learned $U$ to unlearn the new $D_f$, while the other algorithms require the costly computation again to unlearn $D_f$. This difference will be more significant when multiple unlearning requests come in sequence.

The results of unlearning 100 and 1000 instances in terms of accuracy and unlearning time are shown in Figure \ref{fig: ACC_MLP_100} and Figure \ref{fig: ACC_MLP_1000}.

\begin{figure*}[!ht] 
\centerline{ 
\includegraphics[width=0.5\textwidth]{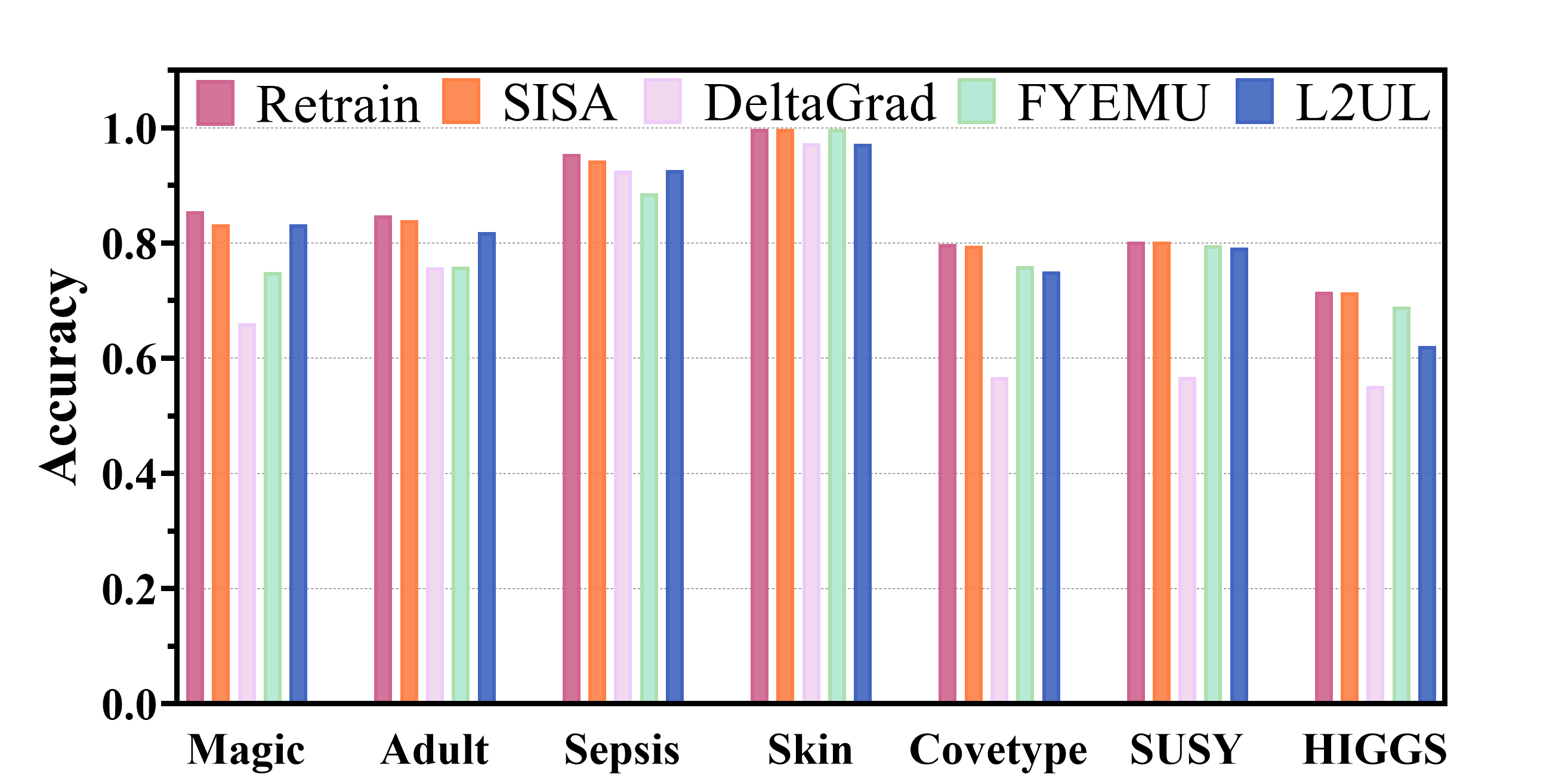} 
\includegraphics[width=0.5\textwidth]{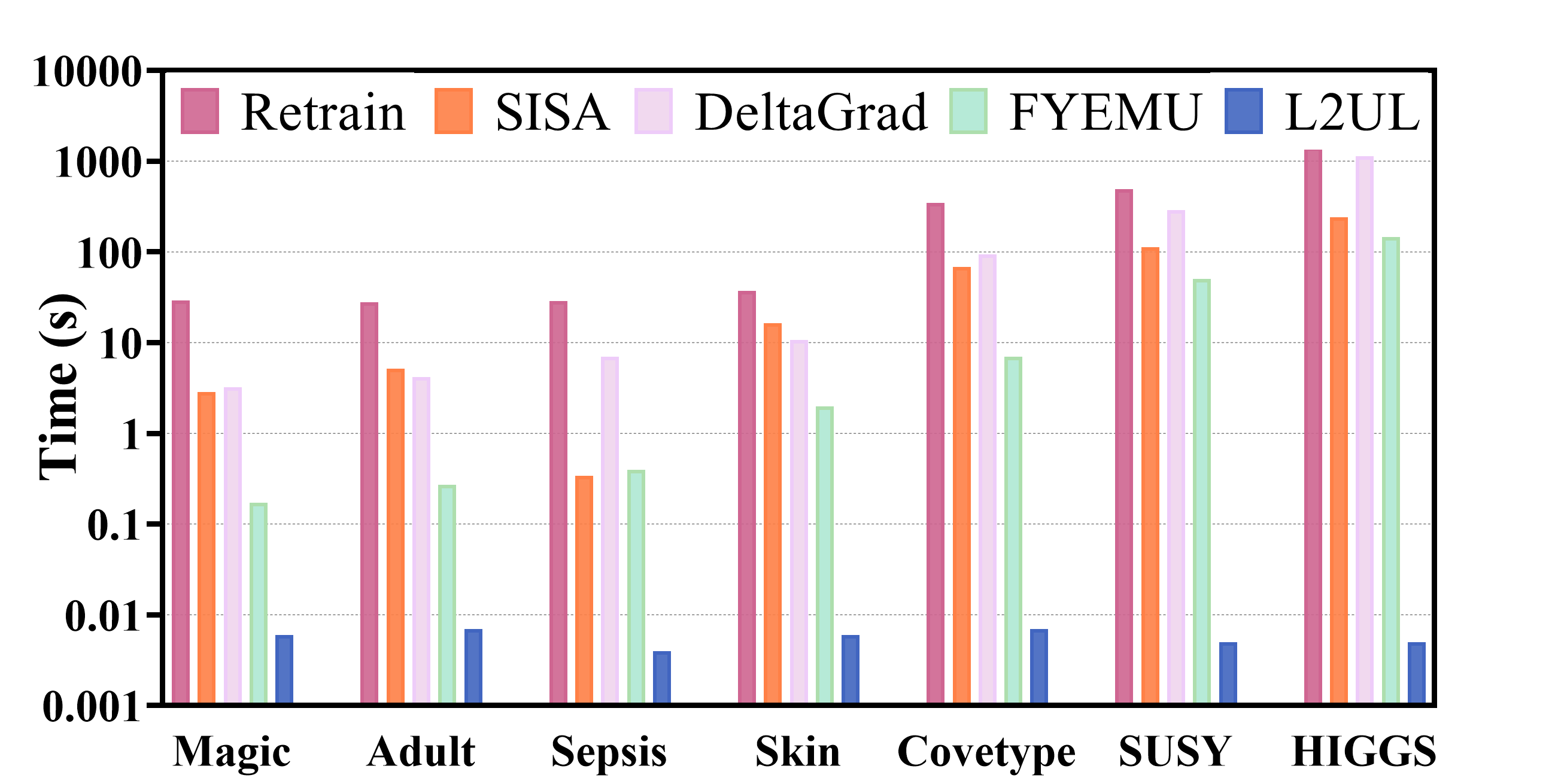}}
\caption{Results of unlearning MLP ($|D_f|=100$).} 
\label{fig: ACC_MLP_100} 
\end{figure*}

\begin{figure*}[!ht] 
\centerline{
\includegraphics[width=0.5\textwidth]{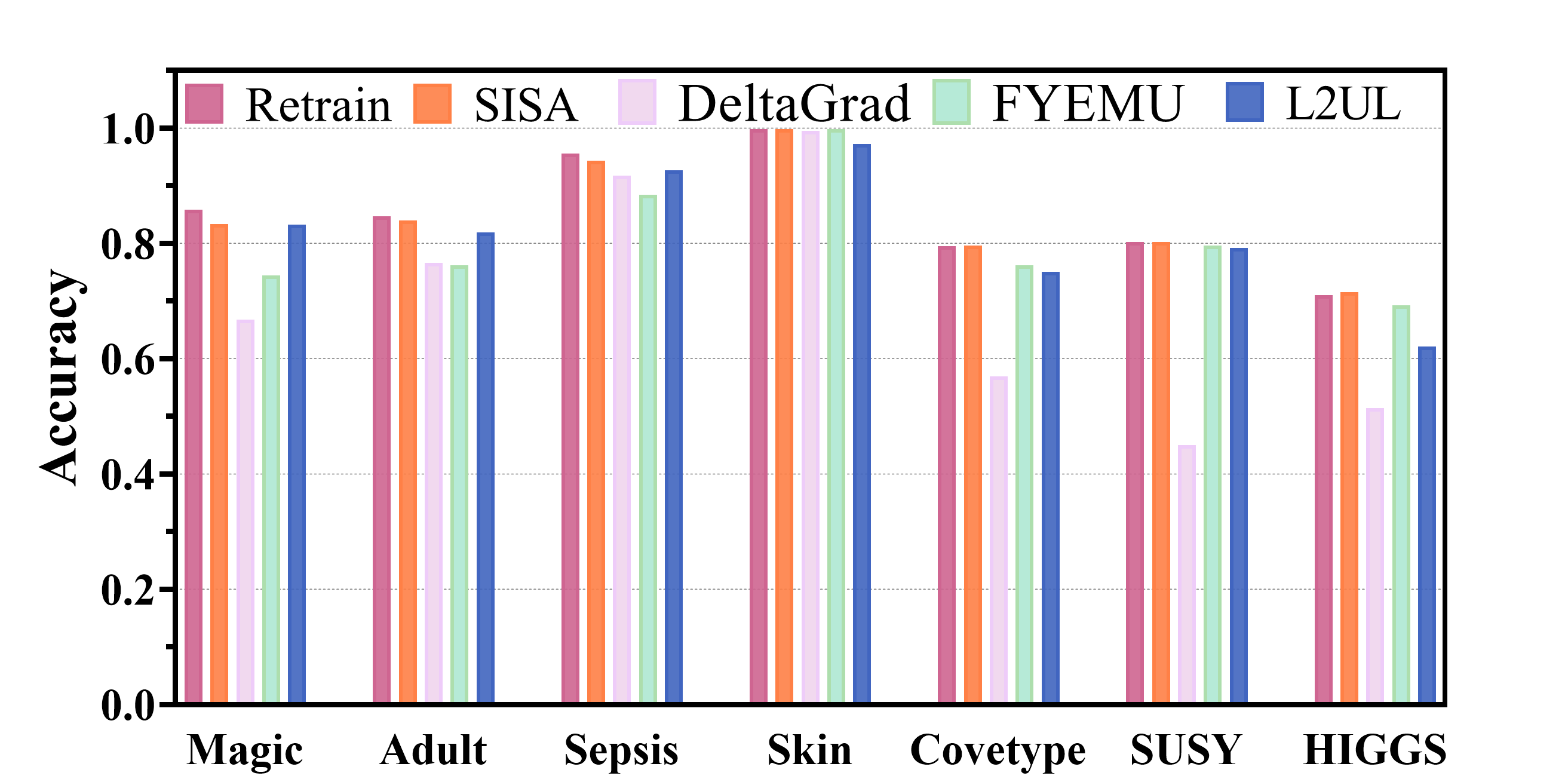} 
\includegraphics[width=0.5\textwidth]{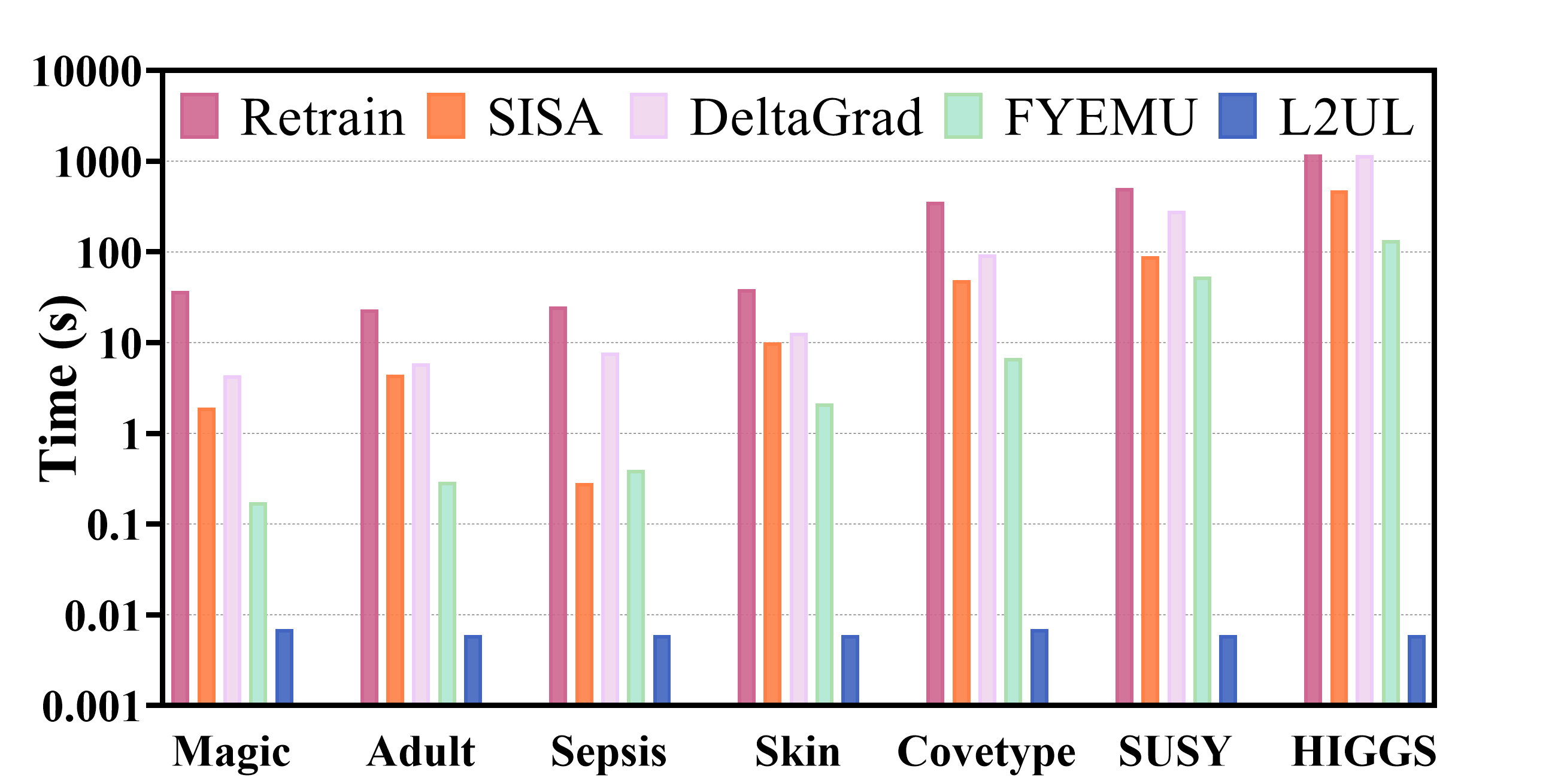}}
\caption{Results of unlearning MLP ($|D_f|=1000$).} 
\label{fig: ACC_MLP_1000} 
\end{figure*}

\subsection{Unlearning Efficacy}

In addition to effectiveness and efficiency, the machine unlearning algorithm must also be evaluated to determine whether it has forgotten users' information. We first show the two very commonly used evaluation metrics. Membership Inference Attack \cite{chundawat2023can, liu2024model} (as used in Table \ref{tab:resnetcifar10}) and accuracy on unlearned data. The results in terms of Membership Inference Attack
and accuracy on $D_f$ ($|D_f|$=1000 for MLP) in Table \ref{tabel: metric2} show that L2UL has forgotten the information of $D_f$ from $A(D)$\footnote{L2UL classifies all samples into one class on the Sepsis dataset, so MIA cannot be performed}. The $U$ has learned the unlearning behaviors. 

\begin{table*}[!ht]
\begin{threeparttable}
\centering
\caption{Results in terms of Membership Inference Attack and accuracy on unlearn data.}
\label{tabel: metric2}
\setlength{\tabcolsep}{6mm}{
\begin{tabular}{lccccccc}
\toprule
         & \multicolumn{3}{c}{MIA} & \multicolumn{4}{c}{Accuracy on $D_f$} \\
\cmidrule(lr){2-4} \cmidrule(lr){5-8}
Datasets & Retrain      & FYEMU      & L2UL      & Retrain & SISA & FYEMU & L2UL \\ \midrule
Magic    & 0.79         & 0.65       & 0.75      & 0.85    & 0.79 & 0.71  & 0.78 \\
Adult    & 0.80          & 0.68       & 0.74      & 0.81    & 0.78 & 0.75  & 0.76 \\
Skin     & 1.00            & 1.00          & 1.00         & 1.00       & 1.00    & 1.00     & 1.00    \\
Covetype & 0.73         & 0.68       & 0.67      & 0.82    & 0.76 & 0.76  & 0.71 \\
SUSY     & 0.72         & 0.69       & 0.69      & 0.73    & 0.71 & 0.70   & 0.71 \\
HIGGS    & 0.65         & 0.68$^\ast$       & 0.50       & 0.74    & 0.77$^\ast$ & 0.69 & 0.63 \\
\bottomrule
\end{tabular}}
\begin{tablenotes}
    \footnotesize 
    \item $\ast$ indicates that the results show that complete forgetting of the data was not achieved.
\end{tablenotes}
\end{threeparttable}
\end{table*}

However, it is not the case that the lower these two scores are, the better the model performance is, because this may give rise to other issues, such as information exposure \cite{golatkar2020eternal} \footnote{In this paper, if the scores are lower than both the original model ($A(D)$, ORI) and Retrain, it means the unlearning algorithm has forgotten the information.}. 
Therefore, we use an example to demonstrate the ability of the proposed algorithm L2UL for calibrating contaminated models, which also shows that L2UL has forgotten $D_f$.

As previously stated, training data can sometimes be contaminated, which negatively impacts the model's performance. Machine Unlearning is an effective method for cleaning the model when dirty data is detected. A simple example is shown in Figure \ref{fig: clean_data}. The artificial dataset has two classes, which can be classified by a linear model. But when the training set is contaminated (Figure \ref{fig: dirty_data} right shows an example with 50 contaminated points), the accuracy of the model will drop.

\begin{figure*}[!ht]
	\centering
	\subfigure[Clean data]{\includegraphics[width=0.24\linewidth]{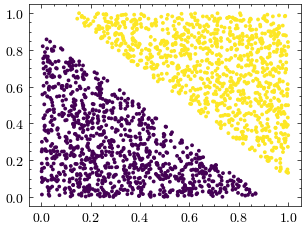}\label{fig: clean_data}}
 \subfigure[Contaminated data]{\includegraphics[width=0.24\linewidth]{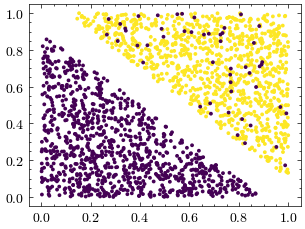}\label{fig: dirty_data}}
	\subfigure[Accuracy of LR]{\includegraphics[width=.24\linewidth]{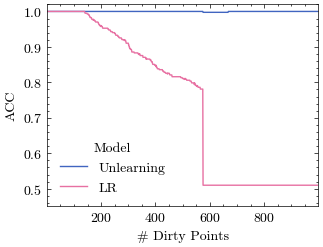}\label{fig: LR_dirty}}
	\subfigure[Accuracy of MLP]{\includegraphics[width=.24\linewidth]{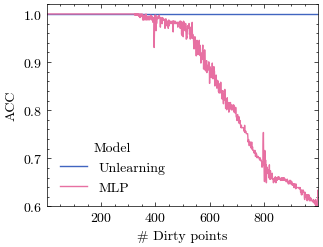}\label{fig: MLP_dirty}}
\caption{Artificial dataset with clean data and contaminated data, and the results of LR and MLP.}
\end{figure*}

Figure \ref{fig: LR_dirty} shows the results of LR in terms of Accuracy. When data is not contaminated, the accuracy is 1, as expected. As the number of dirty points grows, the accuracy declines to approximately 0.5.
When we use L2UL to clean the contaminated model by treating the dirty points as $D_f$, the accuracy of the unlearning model remains 1.00 as the number of dirty points grows.
Similar results are shown in Figure \ref{fig: MLP_dirty} when we replace LR with MLP. The difference is that the accuracy and F1 of MLP decrease more slowly than LR. The accuracy and F1 of the unlearned model are always 1.00.
This shows that L2UL has indeed achieved the unlearning of dirty data.

\subsection{Parameter Sensitivity Analysis}
\label{sect: PSA}

There are four hyperparameters in the Preprocessing and Unlearning phase of L2UL: $\psi,t$ for Isolation Kernel and $s$, $m$ for generating  $\mathscr{X}\times\mathscr{Y}$.
In our experiment, the default setting of hyperparameters is: $\psi=4$, $t=100$, $m=1000$, $s=1000$ for LR, and $s=3000$ for MLP.

We report the accuracy of unlearning 1, 100, and 1,000 instances from LR and MLP on the largest datasets SUSY and HIGGS, with $\psi\in$ [2, 4, 8, 16, 32, 64, 128, 256]\cite{ting2020-IDK}. \cite{ting2020-IDK} shows that the Isolation Distritbuion Kernel is not sensitive to $t$ ,  hence, the sensitivity analysis of $t$ is omitted here.

\begin{figure*}[!ht]
    \centering
    \includegraphics[width=0.32\linewidth]{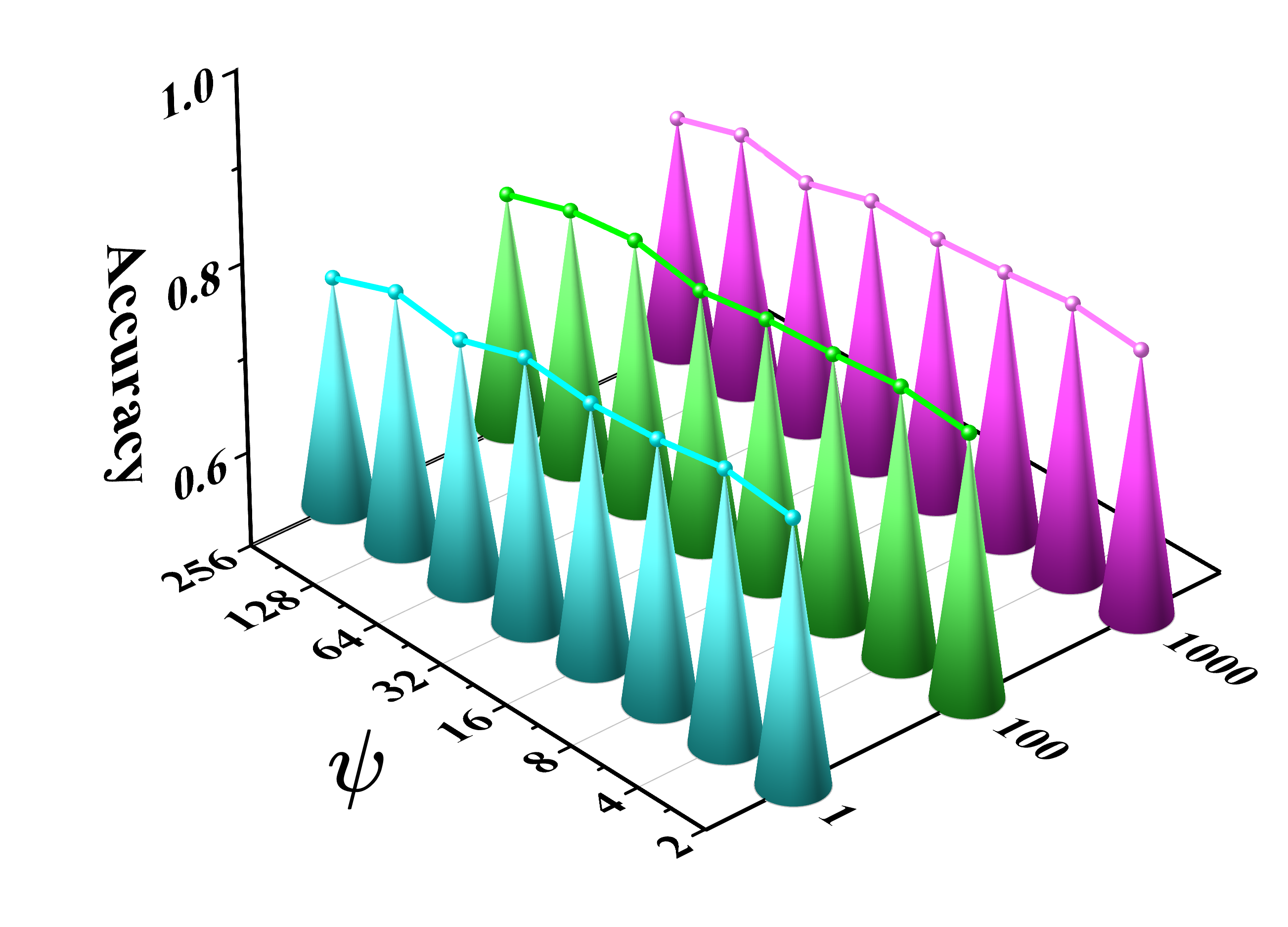}
    \includegraphics[width=0.32\linewidth]{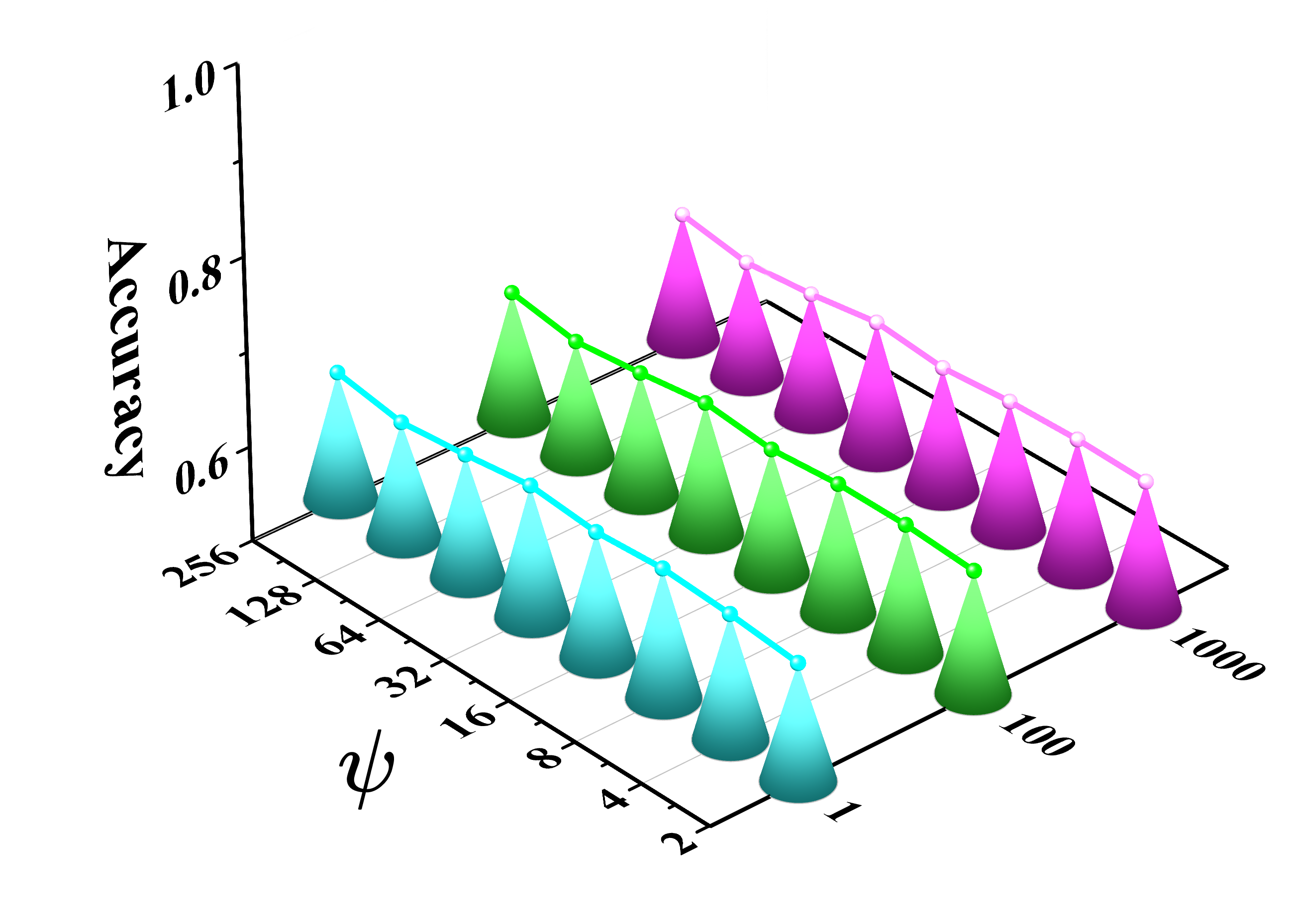}
    \includegraphics[width=0.32\linewidth]{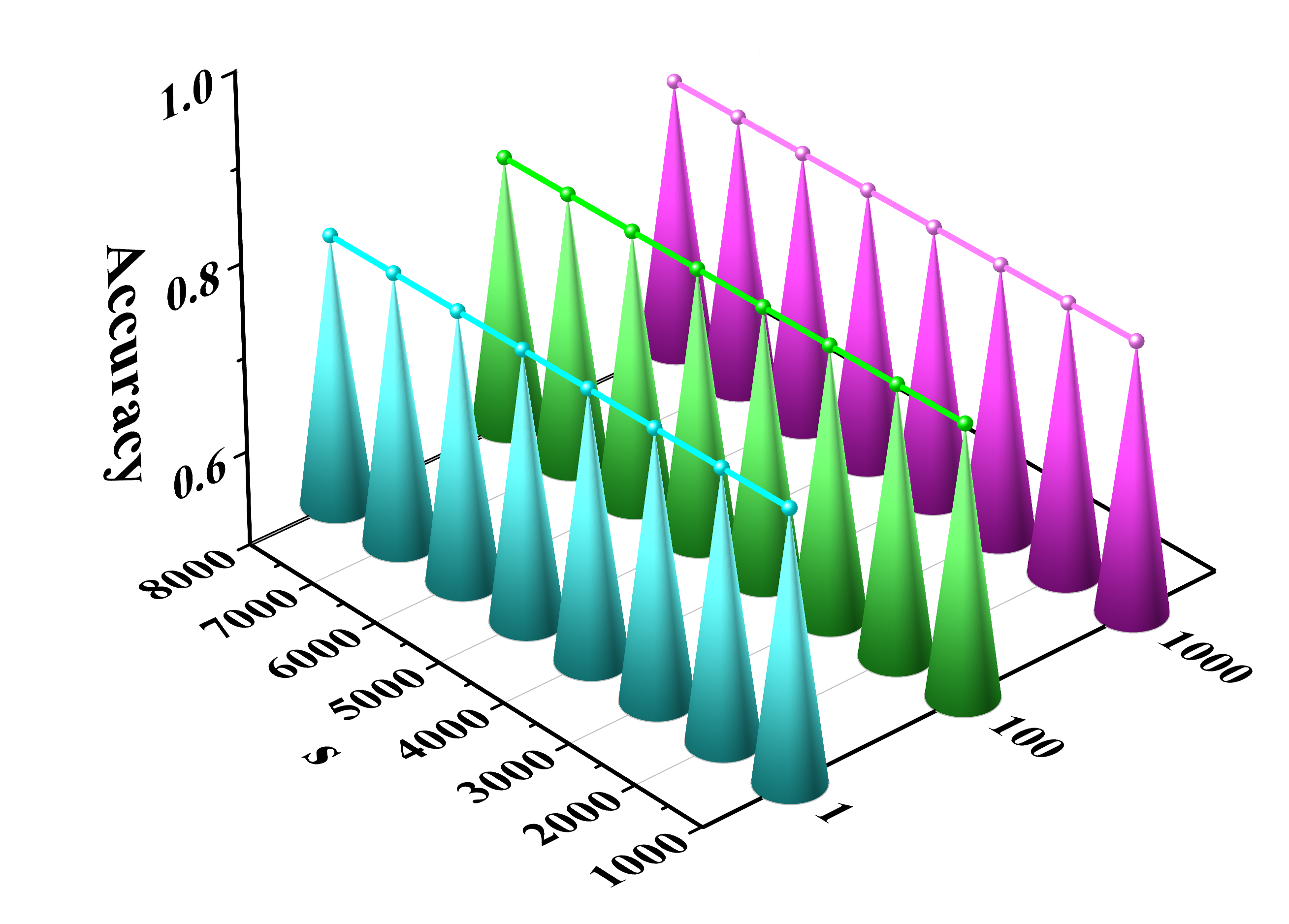}
    
    \includegraphics[width=0.32\linewidth]{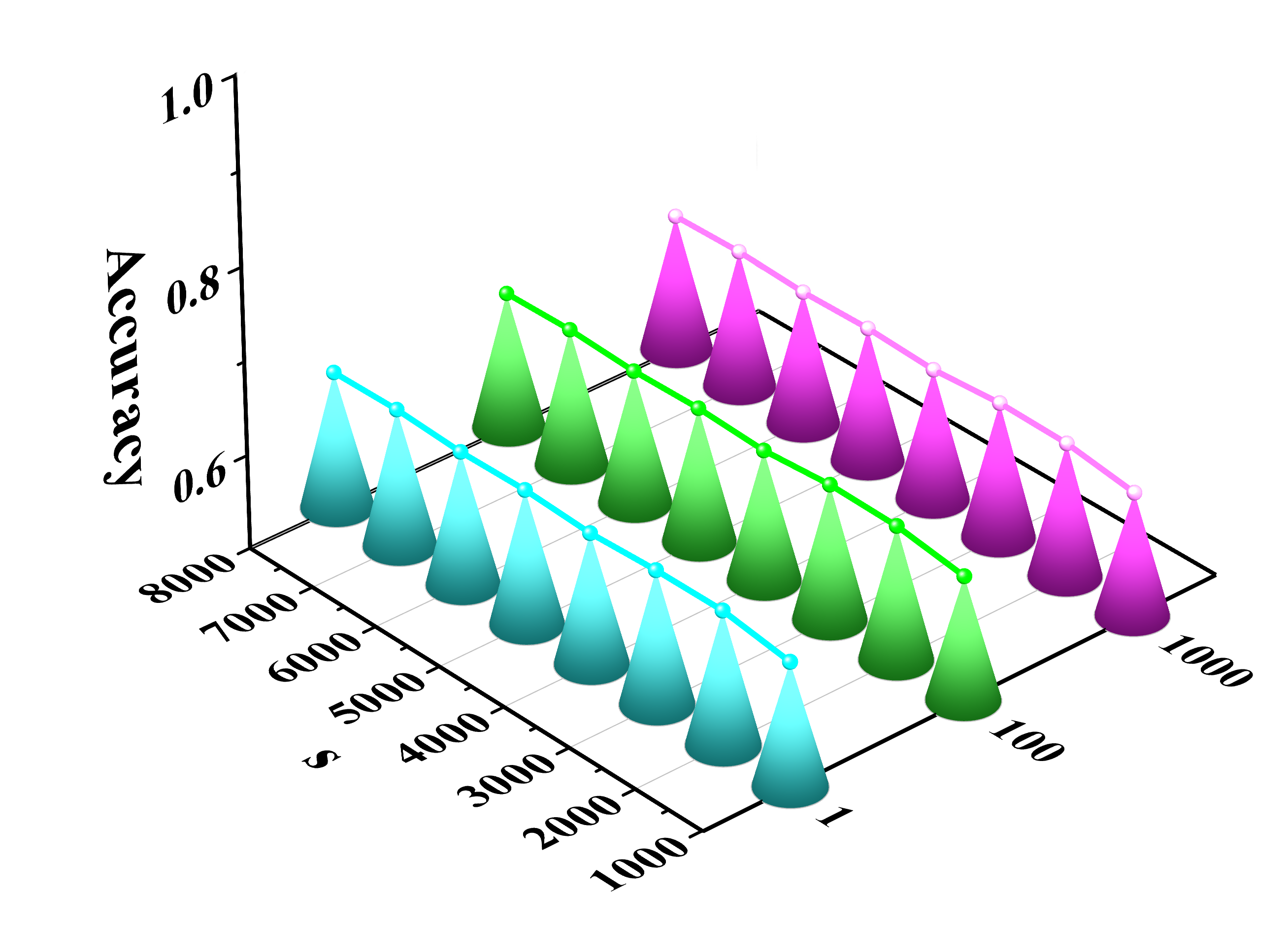}
    \includegraphics[width=0.32\linewidth]{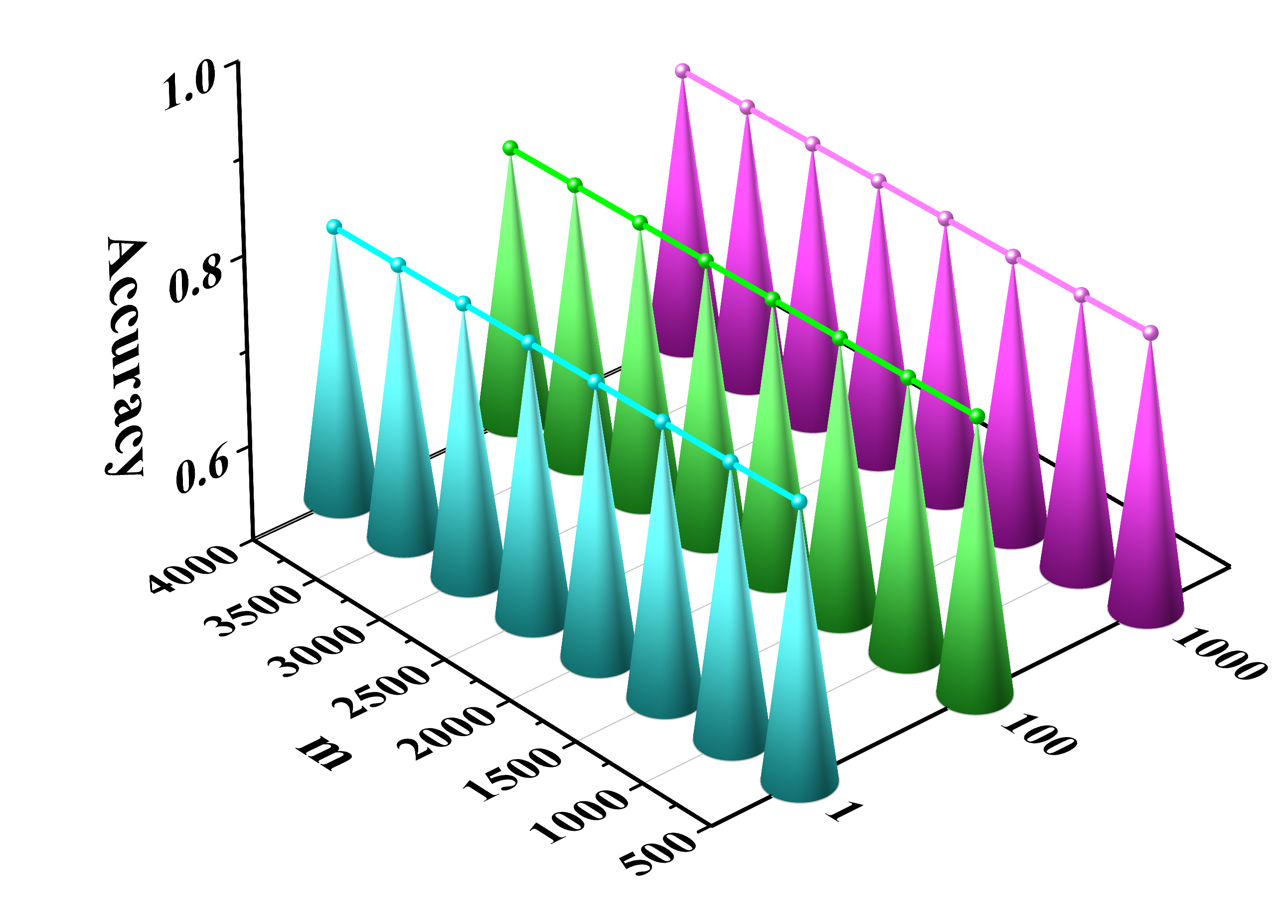}
    \includegraphics[width=0.32\linewidth]{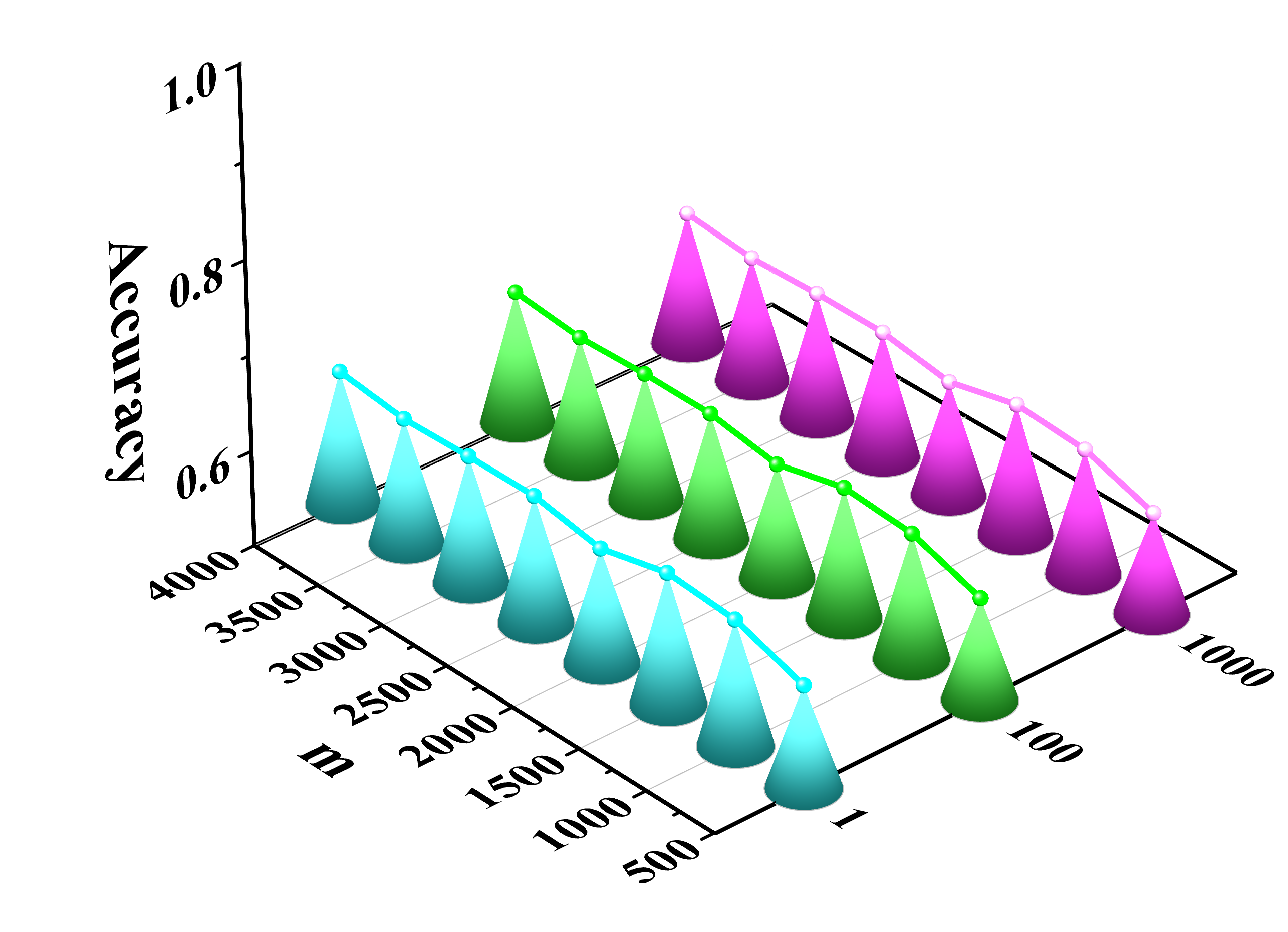}
    \caption{Parameter sensitivity analysis of $\psi$ (top), $s$ (middle), $m$ (bottom) on SUSY (left) and HIGGS (right) for LR.}
    \label{fig: para_lr}
\end{figure*}

\begin{figure*}[!ht]
    \centering
    \includegraphics[width=0.32\linewidth]{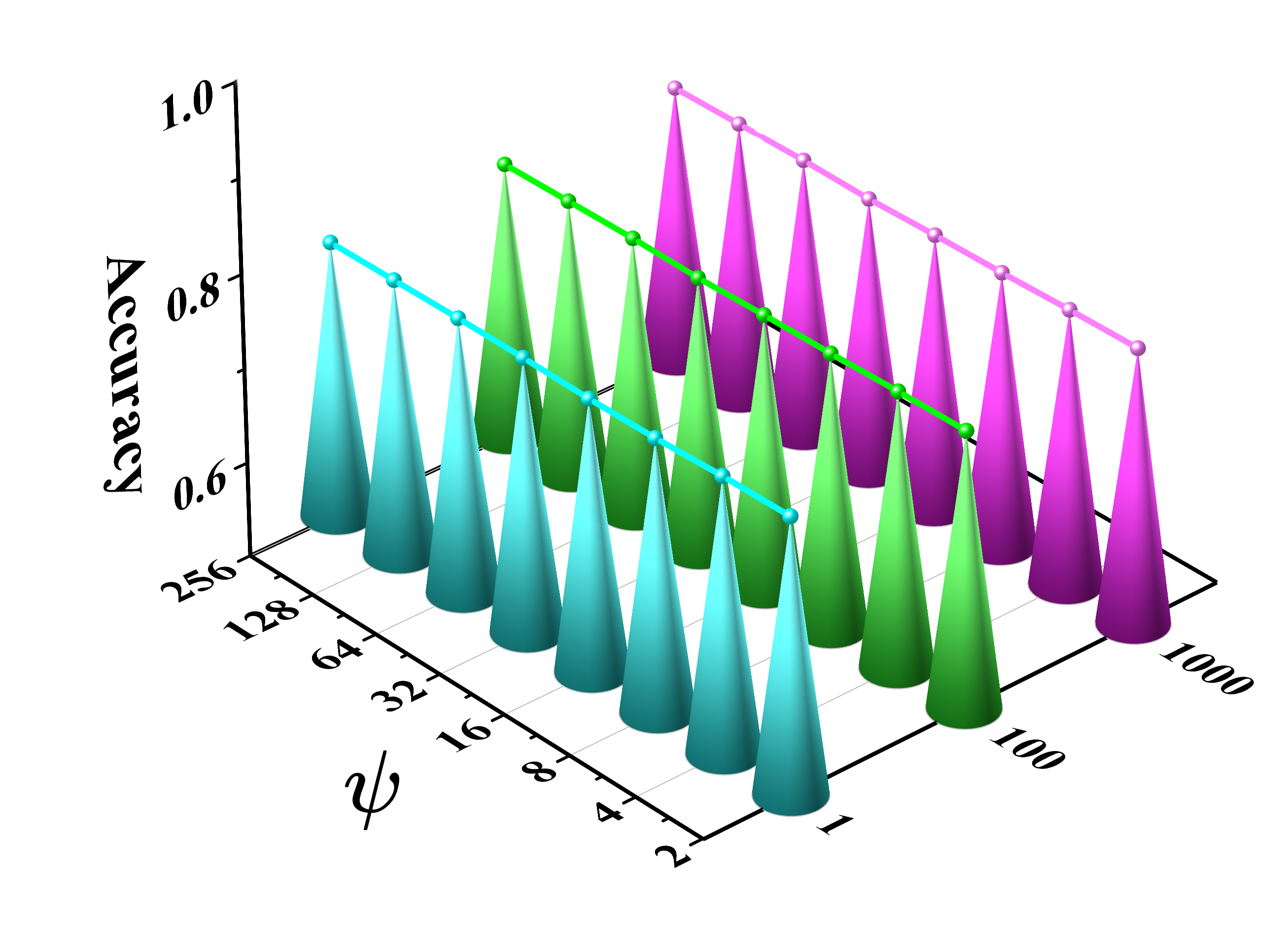}
    \includegraphics[width=0.32\linewidth]{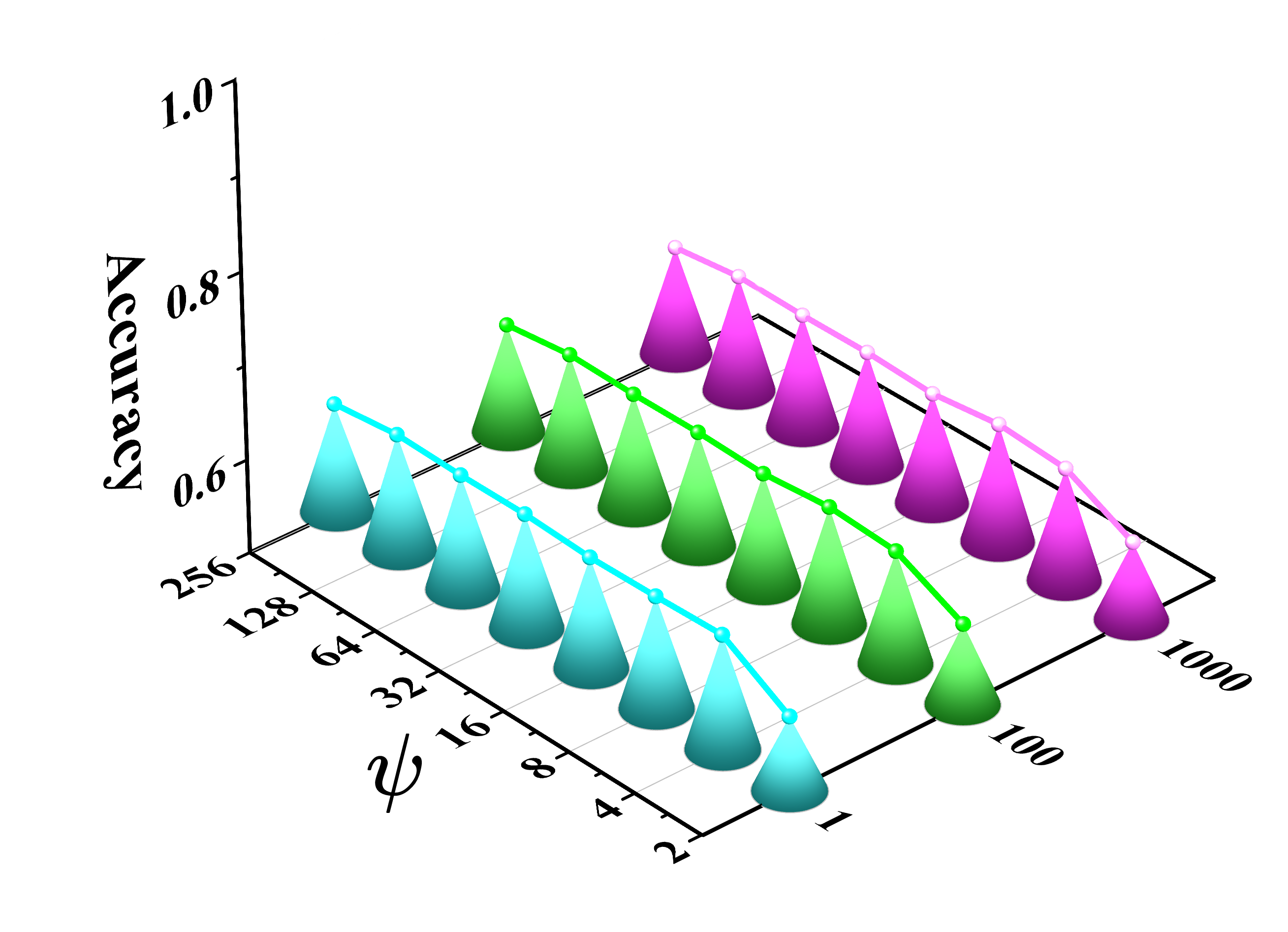}
    \includegraphics[width=0.32\linewidth]{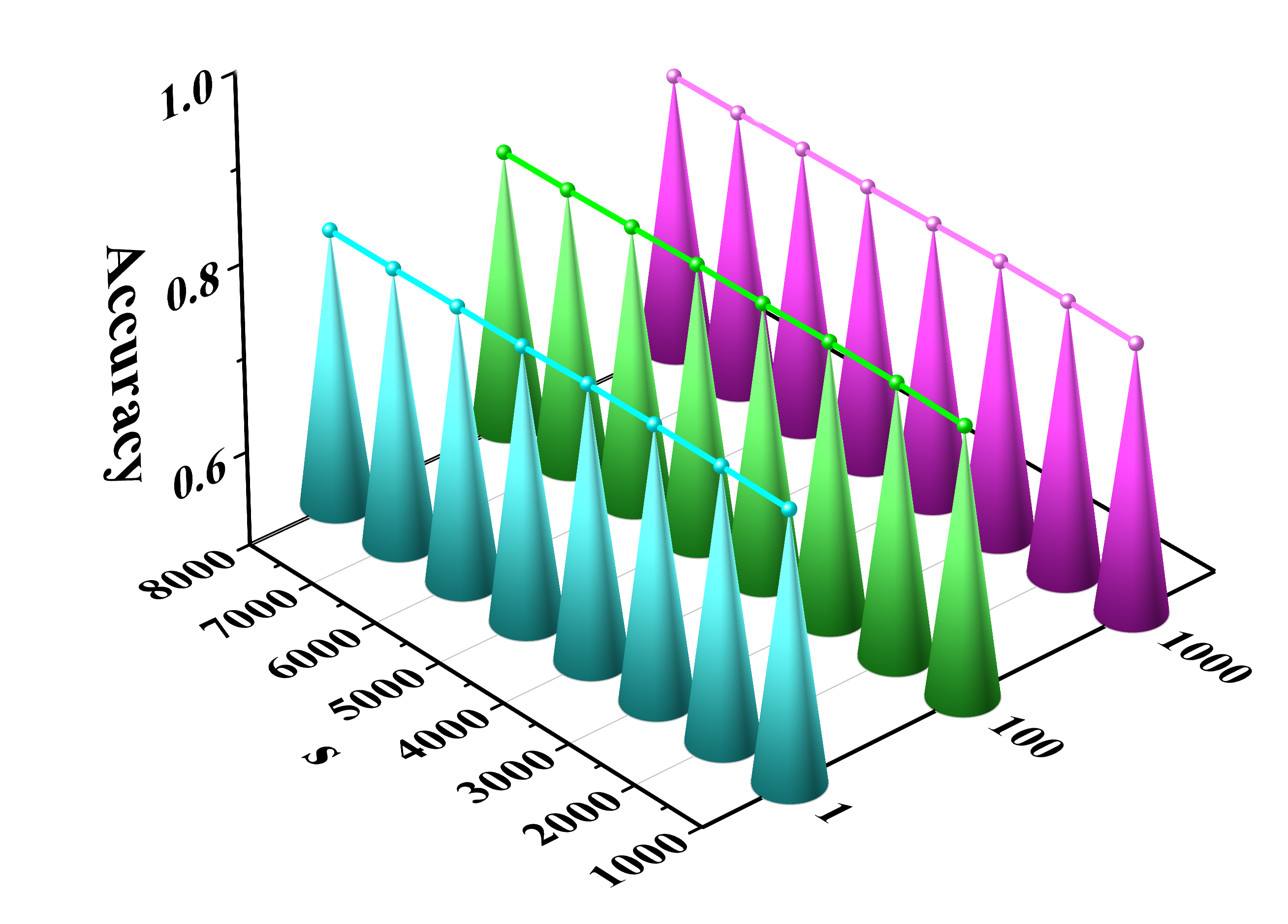}
    
    \includegraphics[width=0.32\linewidth]{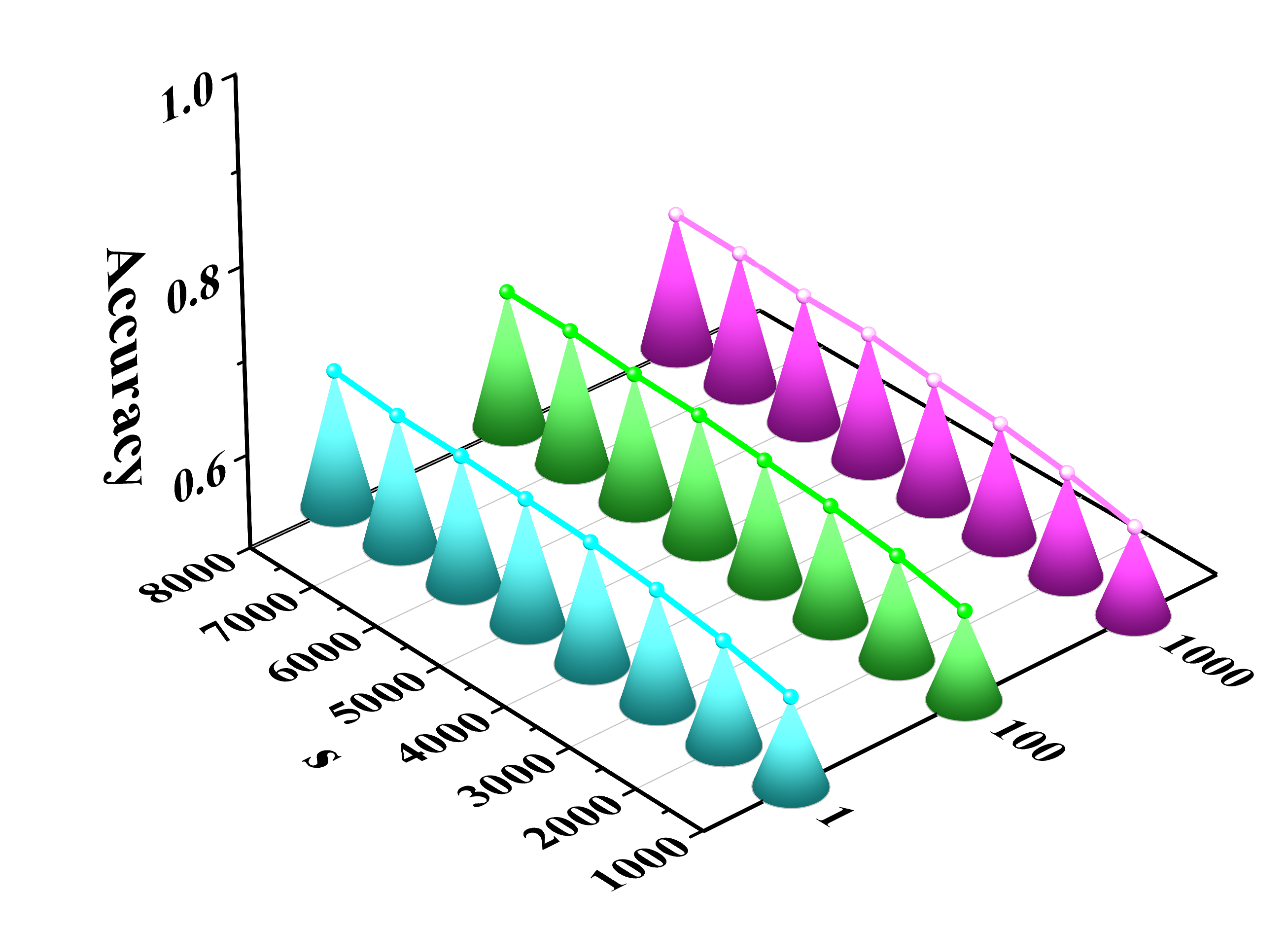}
    \includegraphics[width=0.32\linewidth]{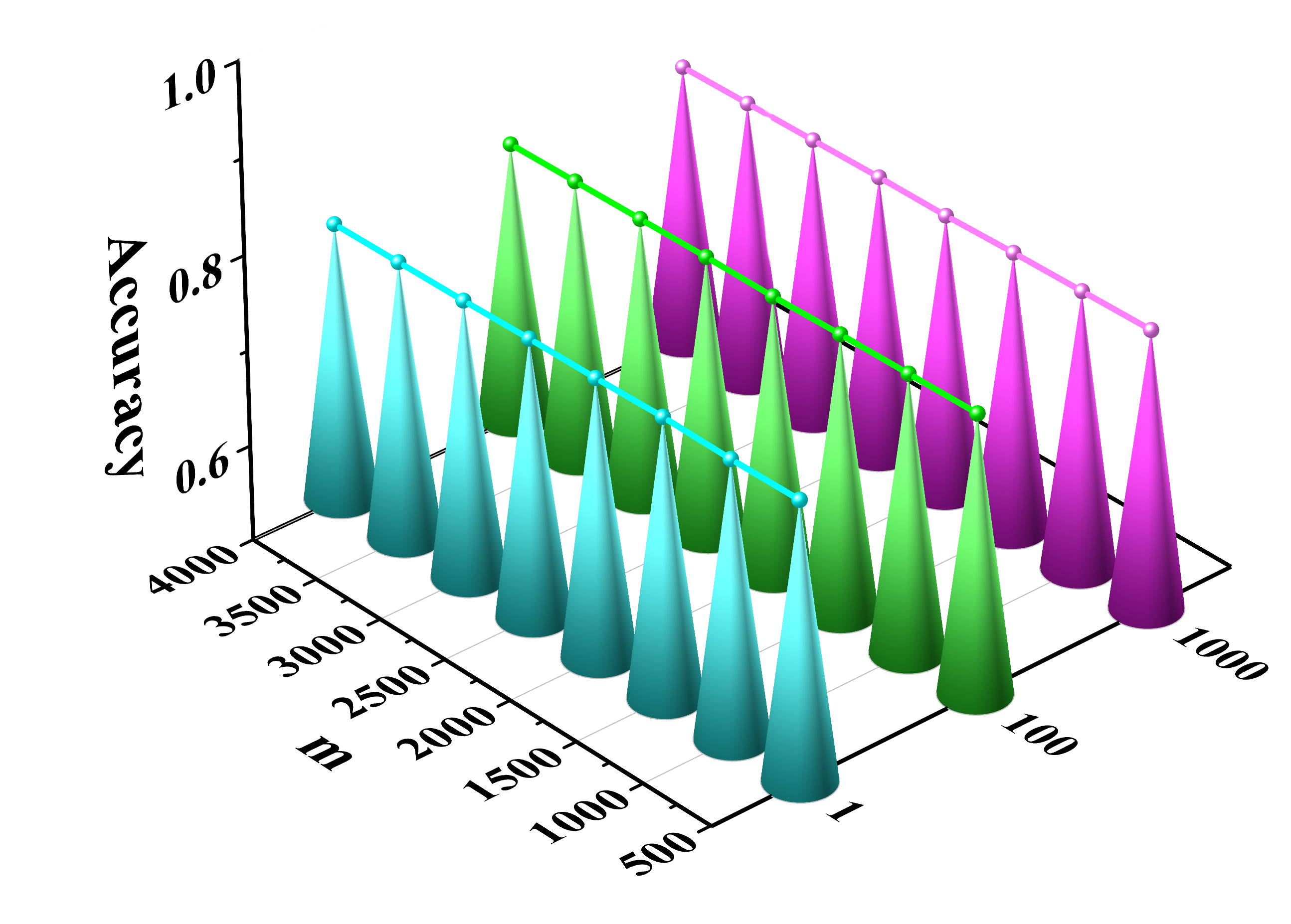}
    \includegraphics[width=0.32\linewidth]{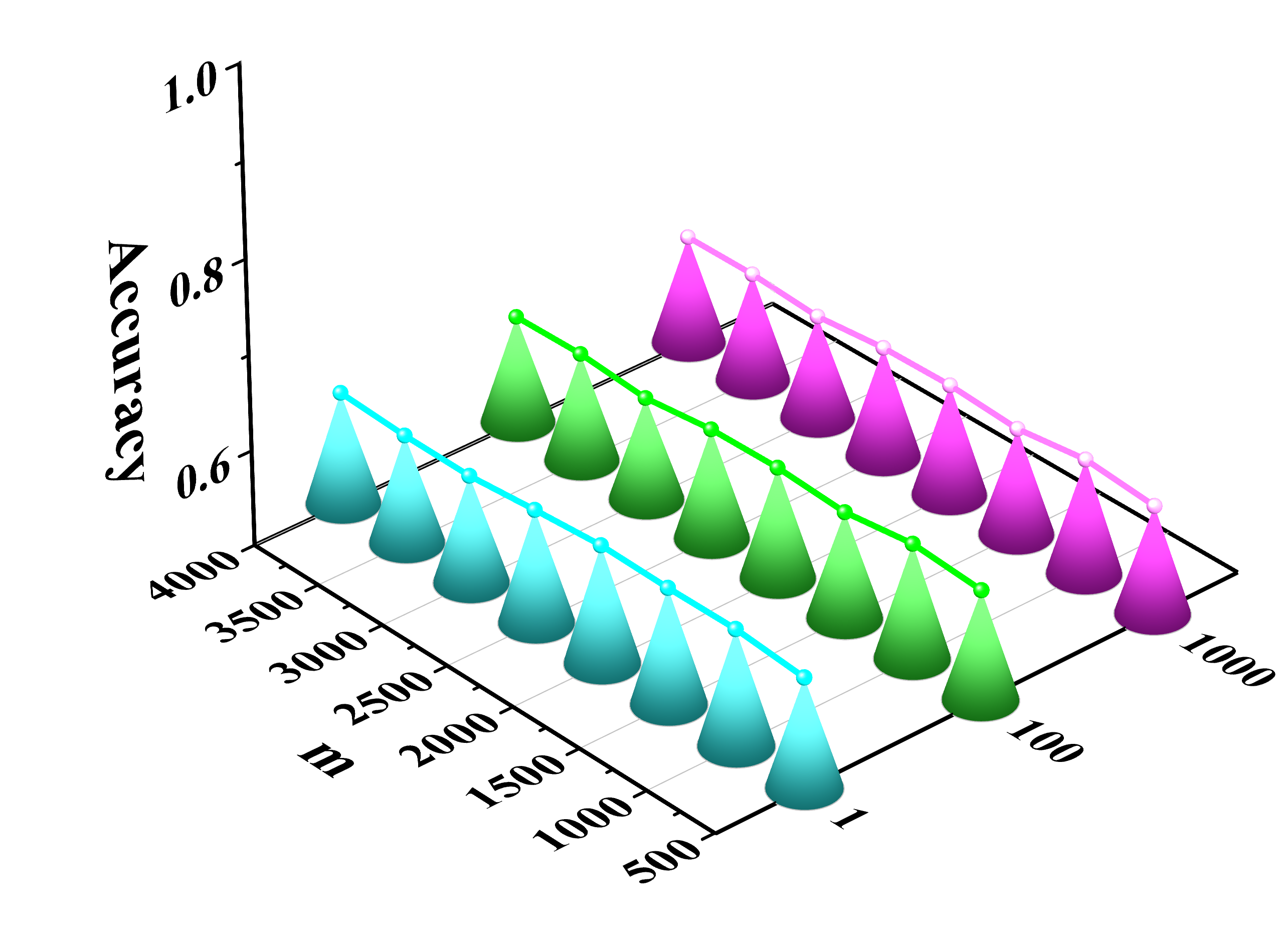}\label{fig: para_m}
    \caption{Parameter sensitivity analysis of $\psi$(top), $s$(middle), $m$(bottom) on SUSY (left) and HIGGS (right) for MLP.}
    \label{fig: para_mlp}
\end{figure*}

The results of $\psi$ are shown in the first row of Figure \ref{fig: para_lr} and Figure \ref{fig: para_mlp}, L2UL is robust to $\psi$ when $\psi\geq 4$.
The results of $s$ are shown in the second row of Figure \ref{fig: para_lr} and Figure \ref{fig: para_mlp}. 
For LR on SUSY, HIGGS, and MLP on SUSY, L2UL is robust to $s$, and the accuracy of L2UL is close to that of retrain. 
For MLP on HIGGS, as $s$ increases, the accuracy of L2UL increases from 0.60 to 0.65. This is because when $s$ (sample size) is small, we cannot get a good model $A(\mathfrak{D})$ and $A(\mathfrak{D}\setminus\mathfrak{D}_f)$ as input into L2UL. When sample size $s$ increases, we can have a better model $A(\mathfrak{D})$ and $A(\mathfrak{D}\setminus\mathfrak{D}_f)$, so that a good $U$ and unlearned model $U(D, D_f, A(D))$ with higher accuracy can be obtained on HIGGS.

The results of $m$ are shown in the third row of Figure \ref{fig: para_lr} and Figure \ref{fig: para_mlp}. L2UL is robust to $m$, as $N_U$ is a simple model that can achieve good performance with a small dataset $\mathscr{D}$.

In a nutshell, L2UL is robust to the parameters $\psi$ and $m$. Compared to the parameters of the L2UL model and the parameters of IDK, L2UL requires stronger data as input. 
L2UL learns an unlearning function $U$, only if the input data $\mathscr{D}$ contains sufficient unlearning information \footnote{Sufficient unlearning information means a correct and well-trained model $A(\mathfrak{D})$ and a correct and well-unlearned model $A(\mathfrak{D}\setminus\mathfrak{D}_f)$.}, can a correct and good unlearning function be learned.

\section{Complexity Analysis}

\begin{table}[t!]
\centering
\caption{Factor of time complexity ($D$-or: $D$-oriented, $D_f$-or: $D_f$-oriented, $A(D)$-or: $A(D)$-oriented).}
\label{tab: time-com}
\setlength{\tabcolsep}{4mm}{
\begin{tabular}{l|llll}
\toprule 
$U$ & $D$-or & $D_f$-or  & $A(D)$-or   & L2UL \\ \midrule
Factor                & $A\& D$                 & $D_f$                      & $D_f \&\ \theta$                & $D_f$   \\
\bottomrule 
\end{tabular}}
\end{table}

\begin{table}[h]
\centering
\caption{Results of unlearning ResNet-18 on CIFAR-10 dataset.}
\label{tab:resnetcifar10}
\setlength{\tabcolsep}{1mm}{
\begin{tabular}{ccrrrr}
\toprule
		                          &                     & 10         & 100    & 1000       & 3000        \\ \midrule
\multirow{4}{*}{ToW($\uparrow$)}    & Retrain   & 1.00       & 1.00   & 1.00       & 1.00           \\
		                          & ADV+IMP   & 0.32       & 0.13   & 0.01       & 0.01        \\
		                          & RUM     & 0.60       & 0.80    & 0.83       & 0.83        \\
\rowcolor{gray!20}		            & L2UL(Ours)          & 0.80       & 1.00   & 0.88       & 0.99       \\ \midrule
\multirow{4}{*}{MIA gap($\downarrow$)}  & Retrain   & 0.00       & 0.00   & 0.00       & 0.00           \\
		                          & ADV+IMP    & 0.30       & 0.13   & 0.17       & 0.14        \\
		                          & RUM     & 0.30       & 0.13   & 0.15       & 0.12        \\
\rowcolor{gray!20}		            & L2UL(Ours)          & 0.30       & 0.13   & 0.17       & 0.14        \\  \midrule
\multirow{5}{*}{Time($\downarrow$)} & Origin ($A(D$)      & 1044.9    &   -    & -          & -            \\
		                          & Retrain    & 1020.5    & 874.4 & 783.2     & 702.7       \\
		                          & ADV+IMP    & 28.4      & 78.7  & 1665.5    & 12088.2    \\
		                          & RUM     & 13.6      & 13.6   & 13.2      & 12.8        \\
\rowcolor{gray!20}		            & L2UL(Ours)          & 0.5       & 0.5   & 0.5       & 0.6 \\ \bottomrule
\end{tabular}}
\end{table}

We show the factor that affects the time complexity of unlearning algorithms as follows (time complexity of the machine learning model $A$ is denoted as $\mathcal{T}(A)$) :
\begin{enumerate}
    \item $D$-oriented unlearning algorithm, such as SISA \cite{SISA}, has a time complexity of \text{$\mathcal{T}(A(|D|/k))$}, where $k$ is the number of shards. Because SISA retrains $A$ on some shards, the time complexity is limited by $A$ and $|D|$. 
    \item $D_f$-oriented unlearning algorithm, such as \cite{guo2020certified}, considers the influence of $D_f$, whose time complexity is $\mathcal{O}(|D_f|d^3)$, which is limited by $D_f$. 
    \item $A(D)$-oriented unlearning algorithm, such as \cite{sekhari2021remember}, considers how to update parameters. Its time complexity is $\mathcal{O}(|D_f|d^2+d^{|\theta|})$ and is limited by $D_f$ and $|\theta|$.
\end{enumerate}

The factors of time complexity are summarized in Table \ref{tab: time-com}.
Intuitively, $D_f$ is unavoidable because we at least need to know the samples we need to forget.  L2UL is also subject to this restriction. Given dataset $D$, we give the time complexity and space complexity of L2UL as follows:

\begin{enumerate}
    \item[*]Time complexity of preprocessing is $\mathcal{O}(\psi t |D| d+m\mathcal{T}(A(s)))$
    \item[*]Time complexity of training $U$ is $\mathcal{T}(U(m))$.
    \item[*] Time complexity of unlearning $D_f$ from $U$ is $\mathcal{O}(\psi t |D_f| d)$.
    \item[*] Space complexity of $U$ is $\mathcal{O}(\psi t d)$.
\end{enumerate}

When unlearning $D_f$, L2UL needs to calculate the feature map of $D_f$ first. But we can store the feature map of every point from $D$ in advance. In this way, when unlearning $D_f$, we only need to select the corresponding feature maps and take the mean. Although the time complexity still depends on $D_f$, the operation of just taking mean is very fast. As a trade-off, space complexity goes up to $\mathcal{O}(\psi t |D|)$.

\section{Unlearn large-scale parameters: ResNet}

This paper and the method L2UL we propose primarily focus on the unlearning of simple models (with fewer parameters) trained on large amounts of training data. As demonstrated in the previous sections, the experimental results confirm that L2UL not only maintains high fidelity to the retrained model but also achieves significant gains in computational efficiency. Furthermore, in this section, we verify the effectiveness of our proposed L2UL on a model with a larger parameter size. We tested the performance of the unlearning ResNet-18 model on the CIFAR-10 dataset and compared it with:

\textbf{ADV+IMP: }\cite{cha2024learning} Misclassify each instance outside of its original prediction, or relabel the instance to a different label. At the same time, use adversarial examples to overcome forgetting at the representation level and use weight importance indicators to accurately locate network parameters that propagate unnecessary information to reduce the time required for forgetting.

\textbf{RUM: }\cite{zhao2024makes} The forgotten set is refined into homogenized subsets based on different features. The meta-algorithm employs the existing algorithms to forget each subset, ultimately providing a model that has forgotten the entire forgotten set.

We train ResNet-18 for 50 epochs on CIFAR-10 with a learning rate of 0.0001. And we report two evaluation metrics, ToW and MIA (Membership Inference Attack) gap, that are used in RUM \cite{zhao2024makes}.

The results are shown in Table \ref{tab:resnetcifar10}. We highlight the following \underline{three key observations}: (a) L2UL achieves the highest ToW scores when forgetting 10, 100, 1000, and 3000 samples. (b) L2UL and the comparison algorithms have a closed low MIA gap. (c) L2UL only takes a very short time to complete the unlearning.

In summary, our experimental results demonstrate not only the scalability of L2UL on large-scale data but also its scalability in terms of model parameters.  L2UL maintains the performance of the model after forgetting data, and, more importantly, its unlearning efficiency is far superior to existing algorithms.

\section{Disscussion}
\label{sec: disscussion}

\subsection{How to learn $U$ when the model cannot be retrained?}

It works well for L2UL to use Equation \ref{eqa: mse} as its loss function only when the parameter of the retrained model $A(\mathfrak{D}\setminus \mathfrak{D}_f)$ can be obtained. 
However, the inability to retrain the model is a common challenge faced by machine unlearning. For example, we can't retrain the model on streaming data.
In this case, L2UL can no longer be used to generate training data $\mathscr{D}$, and the loss function \ref{eqa: mse} cannot be used to train $N_U$. How to find a new loss function when the model cannot be retrained is a challenge. 

\subsection{Retrain $U$ after unlearning $A$}

It is worth emphasizing that when we employ $U$ to unlearn model, user's information $D_f$ will be included in $U$. Therefore, we must unlearn $U$ after we use $U$ to unlearn $A$.

\begin{theorem}
\label{theorem: prob_sample}
    When sampling $m$ subsets $\mathfrak{D}$ from $D$. The probability that $x$ is sampled at least $k_0$ times is \[
    \mathcal{P}(k\geq k_0)=\sum_{i=k_0}^m C_m^k(\frac{s}{|D|})^i(1-\frac{s}{|D|})^{m-i}.\]
\end{theorem}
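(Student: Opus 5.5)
We model the sampling in Algorithm \ref{alg: preprocessing} as $m$ independent rounds. In each round a subset $\mathfrak{D}$ of size $s$ is drawn uniformly at random from $D$, and the rounds do not depend on one another. Fix a point $x\in D$ and let $k$ be the number of rounds in which $x$ lands in $\mathfrak{D}$. The plan is to show that $k$ is binomially distributed with parameters $m$ and $p=s/|D|$, and then read off the upper tail.

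\textbf{Step 1: per-round inclusion probability.} In a single round, $\mathfrak{D}$ is a uniformly random $s$-subset of $D$. By symmetry every point of $D$ is equally likely to be included. The number of subsets containing $x$ is $\binom{|D|-1}{s-1}$, so the inclusion probability is
\[
\binom{|D|-1}{s-1}\Big/\binom{|D|}{s}=\frac{s}{|D|}.
\]
Equivalently, the expected size $s$ of $\mathfrak{D}$ is spread evenly over the $|D|$ points. If instead each point is included independently with probability $s/|D|$ (Bernoulli sampling), the same marginal holds trivially.

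\textbf{Step 2: independence and the binomial law.} Define the indicator $I_j=\mathbf{1}[x\in\mathfrak{D}^{(j)}]$ for rounds $j=1,\dots,m$. Because the rounds are drawn independently, the $I_j$ are i.i.d.\ Bernoulli$(p)$, and $k=\sum_j I_j$. The event $\{k=i\}$ is a disjoint union over the $C_m^i$ choices of which $i$ rounds contain $x$. Each such configuration has probability $p^i(1-p)^{m-i}$, so $\mathcal{P}(k=i)=C_m^i p^i(1-p)^{m-i}$. Summing over $i=k_0,\dots,m$ gives the stated formula. The statement writes $C_m^k$, but the summation index is $i$, so it should be read as $C_m^i$.

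\textbf{Main obstacle.} The only real subtlety is the modelling assumption. Within a round, inclusions of different points are dependent, since the subset size is fixed at $s$. For a single fixed $x$, however, only the marginal probability $s/|D|$ matters, and that is unaffected. Across rounds the argument needs the draws to be independent, which holds because the algorithm resamples afresh in each iteration. I would state this independence explicitly as the assumption underlying the theorem.
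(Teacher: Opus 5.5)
Your proposal is correct and takes essentially the same route as the paper's proof: the per-round inclusion probability is $s/|D|$, the count over $m$ independent rounds is binomial, and the tail is summed over $i\ge k_0$. You also make explicit three things the paper leaves implicit: the counting argument $\binom{|D|-1}{s-1}/\binom{|D|}{s}=s/|D|$, the assumption that rounds are independent, and the correction of $C_m^k$ to $C_m^i$.
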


In our experiments, $\mathcal{P}(k\geq 2)<0.004$ on HIGGS, which means the probability that the data $\{x, y\}\in D$ is used to train $U$ is very low. So we barely need to retrain $U$. Even if we need, the time cost is affordable, as it only takes 5.2 seconds to retrain $U$ on HIGGS.

\subsection{Subdivide the machine unlearning task}
Current machine unlearning methods often treat forgetting tasks as a monolithic problem, neglecting the inherent diversity in model complexity and data scale. We argue for a more nuanced, task-specific approach. Since a forgetting task is defined by both the model and the training data, we propose a taxonomy that categorizes machine unlearning into four distinct scenarios based on model size and dataset size:

1) Simple models on small datasets. This represents the most straightforward scenario, where retraining from scratch remains computationally feasible and is often the preferred solution.

2) Complex models on small datasets. Although the training data is limited, the increased model complexity significantly raises the cost of retraining, making it an impractical choice.

3) Simple models on large datasets. This is the focal scenario of our work. Because the model itself is lightweight, applying sophisticated unlearning techniques designed for complex architectures would be unnecessarily resource-intensive—essentially overkill. Instead, a lightweight method, such as the one we propose, is sufficient to achieve effective forgetting.

4) Complex models on large datasets (e.g., large language models). This constitutes the most challenging scenario, demanding advanced and scalable unlearning strategies.

The key advantage of this taxonomy lies in its strategic flexibility: by matching the unlearning method to the specific complexity of the task, we can achieve efficient data removal without incurring prohibitive computational costs, ensuring we neither "use a sledgehammer to crack a nut" nor deploy inadequate solutions for demanding scenarios.

\section{Conclusion}

In this paper, we propose Learning-to-UnLearn (L2UL), to the best of our knowledge, the first framework to address machine unlearning by learning the unlearning behaviors rather than manually designing complex functions. By prioritizing a distribution-oriented perspective, L2UL achieves a remarkably streamlined yet potent unlearning capability. Focusing on scenarios where the model is simple but the data training dataset is large, we utilized Logistic Regression and Multilayer Perceptrons to demonstrate the ability of L2UL to handle massive datasets with exceptional speed and no loss in accuracy. Both theoretical and empirical results prove that L2UL significantly outperforms existing methods in efficiency. Furthermore, tests on contaminated models and sensitivity analyses confirm the effectiveness and stability of L2UL across various configurations. Finally, successful validation on ResNet underscores the scalability of L2UL to more complex models, offering a robust and efficient solution for modern privacy requirements.




\appendix
\section{Proofs}
\label{appendix: proof}
\subsection{Proof of Theorem 4.5}

\textbf{Theorem 4.5}
    \begin{equation*}
        L(f_{U(A(D), D, D_f)})\leq L(f_{A(D\setminus D_f)}) + 2R\sqrt{\epsilon},
    \end{equation*} 
    where $R$ is the radius of the dataset $X$ ($\forall x\in X, ||x||\leq R$), and $\epsilon$ is the generalization bound (MSE) of the unlearning model $U$.

\begin{proof}
    Denote $U(A(D), D, D_f)-A(D\setminus D_f)$ as $e$, then $||e||\leq\sqrt{\epsilon}$. By Taylor Expansion, we approximately have
    \begin{equation*}
    \begin{aligned}
        & L(f_{U(A(D), D, D_f)})-L(f_{A(D\setminus D_f)}) \\
        = &\frac{\partial L(f_{A(D\setminus D_f)})}{ \partial A(D\setminus D_f)}^\top e+o(\|e\|),
    \end{aligned}
    \end{equation*}
    where $\lim_{\|e\|\rightarrow 0}\frac{o(\|e\|)}{\|e\|}=0$. Then by Cauchy–Schwarz inequality and convexity of norm function, we have
    \begin{equation*}
        \begin{aligned}
             & L(f_{U(A(D), D, D_f)})-L(f_{A(D\setminus D_f)})\\
             = &E_{x,y}[(f_{A(D\setminus D_f)}(x)-y) x^\top] e+o(\|e\|)\\
             \leq & ||E_{x,y}[(f_{A(D\setminus D_f)}(x)-y) x]||\cdot||e||+o(\|e\|)\\
            \leq & E_{x,y}[||(f_{A(D\setminus D_f)}(x)-y)x||]\cdot||e||+o(\|e\|) \\
            \leq & 2R\sqrt{\epsilon} +o(\sqrt{\epsilon}).
        \end{aligned}
    \end{equation*}
\end{proof}

Here we refer $U(A(D), D, D_f)$ and $A(D\setminus D_f)$ as the parameters of the unlearned model and retrained model, respectively.

\subsection{Proof of Theorem 4.6}

\textbf{Theorem 4.6}
    \begin{equation*}
        L(f_{U(A(D), D, D_f)})\leq L(f_{A(D\setminus D_f)}) + A\sqrt{\epsilon},
    \end{equation*} 
    where $A$ is a finite scalar determined by the MLP structure and $\epsilon$ is the generalization bound (MSE) of the unlearning model $U$.

\begin{proof}
For the ($N+1$)-layer MLP model (the 0-th layer is the input layer, the ($N+1$)-th layer is the output layer), we use softmax activation for the output layer, and sigmoid activation function for the rest of the layers. We denote the number of neurons in the i-th layer as $n_i$, then the weight between the $(i-1)$-th layer and $i$-th layer are $W_i$ of shape $(n_k,n_{k-1})$ and $B_i$ of shape $(n_i,1)$. For a finite test set $X$, $|X|=m$, we denote it as $A_0$. Then, the forward propagation scheme is described as follows:
\[
    Z_k = W_k A_k + B_k,\ A_k = f_k(Z_k)
\]
for $k\in\{1,2,..., N+1\}$, where $f_k(\cdot)$ is the activation function we use in the k-th layer (softmax for the output layer, sigmoid for the rest layers).

For the back propagation, when using Cross-Entropy Loss, we have
\[
    d\;{Z_{N+1}} = A_{N+1}-Y, \    d\;W_{N+1} = \frac{1}{m}   d\;Z_{N+1} A_N^\top,
\]
\[
    d\; B_{N+1} = \frac{1}{m} d\;Z_{N+1} 1_{(m,1)},
\]
and 
\begin{equation}
    d\;Z_k = W_{k+1}^T d\;Z_{k+1} \odot f^\prime_k(Z_k),
    \label{equ: backprop Z}
\end{equation}
\[
    d\; W_k = \frac{1}{m}d\;Z_k A^\top_{k-1}, \
    d\;B_k = \frac{1}{m}d\;Z_k 1_{(m,1)}
\]
for $k\in\{1,2,...,N,N+1\}$,
where $d$ means the derivative of the Cross Entropy Loss, $1_{(m,1)}$ is a column vector of size m, with each entry $=1$, and $\odot$ is the element-wise multiplication. 

Denote  $A(D\setminus D_f)$ as $\Theta$ and $U(A(D), D, D_f)$ as $\hat{\Theta}$. We have
\begin{equation}
    \Theta = \begin{bmatrix}
  W\\B
\end{bmatrix},
\end{equation}
where $W, B$ are the concatenation of each layer's row-flatten parameter, i.e. $W=\left [ \text{rft}(W_1),...,\text{rft}(W_{N+1})\right ]^\top $,$W=\left [ \text{rft}(B_1),...,\text{rft}(B_{N+1})\right ]^\top $, where $\text{rtf}(\cdot)$ is the operator that flatten a matrix into a row vector. The same notation goes for $\hat{\Theta}$. As in     \ref{theorem: logistic Unlearn error}, we only need to show that $d\;\Theta$ is bounded, which can be achieved from the bound for $d\; B$ and $d\ ;W$, since $\|d\;\Theta\|^2=\|d\; W\|^2+\|d\; B\|^2$.

For $W_{N+1}$, we have 
\[
    \|d\;\text{rft}({W_{N+1}})\|^2 = \|\frac{(A_{N+1}-Y)A_N^\top}{m}\|_F^2 \leq 4n_N n_{N+1}.
\]
since the absolute value of each entry in $A_{N+1},A_N,Y$ is smaller than 1. For the bound on $d\;\text{rft} W_k(k\neq N+1)$, we first need to get the bound $R_k$ for $d\;\text{rft}Z_k$. We know that 
\[
    \|d\;\text{rft}(Z_{N+1})\|_\infty\leq R_{N+1}=2.
\]
Then according to Equation \ref{equ: backprop Z}, we have 
\[
    \|d\;\text{rft}Z_k\|_\infty\leq \frac{R}{4}n_{k+1} R_{k+1}\triangleq R_k,
\]
under the assumption $\|W\|_\infty\leq R$. The iterative form of $R_k$ can be expressed equivalently as 
\[
    R_k = (\frac{R}{4})^{N+1-k}\prod_{j=k+1}^{N+1}n_j .
\]
Coming back to $W_k$, for $k\leq N$, we have 
\[
\|d\;\text{rft}(W_k)\|^2\leq R_k^2 n_k n_{k-1}.
\]
Finally, we have
\begin{equation*}
    \begin{aligned}
        \|d\;\text{rft}W\|^2 &= \sum_{k=1}^{N+1} \|d\;\text{rft}W_k\|^2\\
        &\leq \underbrace{4n_N n_{N+1}+\sum_{k=1}^N (\frac{R^2}{16})^{N+1-k}\frac{\prod_{j=k-1}^{N+1}n_j^2}{n_k n_{k+1}}}_{\triangleq A_W}.
    \end{aligned}
\end{equation*}
Similarly, we have
\begin{equation}
    \begin{aligned}
        \|d\;\text{rft}(W_k)\|^2\leq R_k^2 n_k n_{k-1}.
    \end{aligned}
\end{equation}
Finally, we have
\begin{equation}
    \begin{aligned}
        \|d\;\text{rft}B\|^2 &= \sum_{k=1}^{N+1} \|d\;\text{rft}B_k\|^2\\
        &\leq \underbrace{4n_{N+1}+\sum_{k=1}^N (\frac{R^2}{16})^{N+1-k}\frac{\prod_{j=k}^{N+1}n_j^2}{n_k}}_{\triangleq A_B}.
    \end{aligned}
\end{equation}
Then $\|d\;\Theta\|\leq\sqrt{A_w^2+A_B^2}\triangleq A$.
\end{proof}

\subsection{Proof of Complexity}
\label{appendix: complexity}
\begin{enumerate}
    \item[*]Time complexity of preprocessing is $\mathcal{O}(\psi t |D| d+m\mathcal{T}(A(s)))$

    \begin{proof}
        Preprocessing first requires a total time of $\mathcal{O}(\psi t |D| d$ to produce the feature map (line 2 in Algorithm \ref{alg: preprocessing}), and requires $\mathcal{O}(\psi ts+\mathcal{T}(A(s))$ for each loop, So the time complexity of Algorithm \ref{alg: preprocessing} is $\mathcal{O}(\psi t |D| d+m\mathcal{T}(A(s)))$.
    \end{proof}
    
    \item[*]Time complexity of training $U$ is $\mathcal{T}(U(m))$.

    \begin{proof}
     The time complexity of Train U is using the L2UL to train $N_U$ on the $m$ data points, so the time complexity is  $\mathcal{T}(U(m))$. 
\end{proof}
    \item[*] Time complexity of unlearning $D_f$ from $U$ is $\mathcal{O}(\psi t |D_f| d)$.

\begin{proof}
     Algorithm \ref{alg: unlearn} requires time of $\mathcal{O}(\psi t |D_f| d)$ to get the feature map (line 1). and $\mathcal{O}(\psi t |D_f|)$ to get $\mu_f$ (line 2). Therefore, the time complexity to unlearn $D_f$ from the $U$ is $\mathcal{O}(\psi t |D_f| d)$.
     \end{proof}
    \item[*] Space complexity of $U$ is $\mathcal{O}(\psi t d)$.

    \begin{proof}
    we need $\mathcal{O}(\psi t)$ to store the $\mu_D$, and $\mathcal{O}(\psi td)$ to store the seeds for partition in isolation kernel. Therefore,  the space complexity of $U$ is $\mathcal{O}(\psi t d)$.
\end{proof}

\end{enumerate}

\subsection{Proof of Theorem 7.1}
\textbf{Theorem 7.1}

We sample $m$ subsets $\mathfrak{D}$ from $D$. The probability that sample $x$ is sampled at least $k_0$ times is: \[
    \mathcal{P}(k\geq k_0)=\sum_{i=k_0}^m C_m^k(\frac{s}{|D|})^i(1-\frac{s}{|D|})^{m-i}.
    \]

\begin{proof}
    Sampling $s$ points from $|D|$ points, The probability that point $x$ is sampled is $\mathcal{P}(x)=\frac{s}{|D|}$.

    Do this sampling $m$ times, the probability that point $x$ is sampled $k$ times is:
    \[
    P(k)=C_m^k(\frac{s}{|D|})^k(1-\frac{s}{|D|})^{m-k}.
    \]

    So the probability that sample $x$ is sampled at least $k_0$ times is: \[
    \mathcal{P}(k\geq k_0)=\sum_{i=k_0}^m C_m^k(\frac{s}{|D|})^i(1-\frac{s}{|D|})^{m-i}.
    \]
    
\end{proof}



\bibliography{ref}
\bibliographystyle{ACM-Reference-Format}

\end{document}